\def\eqref#1{equation~\ref{#1}}
\def\1{\bm{1}}
\def\vb{{\bm{b}}}
\def\vx{{\bm{x}}}
\def\vy{{\bm{y}}}
\def\vz{{\bm{z}}}
\DeclareMathAlphabet{\mathsfit}{\encodingdefault}{\sfdefault}{m}{sl}
\SetMathAlphabet{\mathsfit}{bold}{\encodingdefault}{\sfdefault}{bx}{n}
\newcommand{\val}{{\textrm{value}}}
\newcommand{\Val}{{\textrm{value}}}
\newcommand{\MILP}{{\textrm{MILP}}}
\newcommand{\LP}{{\textrm{LP}}}
\newcommand{\Improve}{\mathrm{Improve}}
\newcommand{\Utility}{\mathrm{SAS}}
\newcommand{\Sol}{\mathrm{Sol}}
\newcommand{\sol}{\mathrm{sol}}
\newcommand{\UB}{\mathrm{UB}}
\newcommand{\LB}{\mathrm{LB}}
\newcommand{\ReLU}{\mathrm{ReLU}}
\newcommand{\toolname}{Hybrid MILP}
\begin{document}
	
	\title{Solution-aware vs global ReLU selection: \\
		partial MILP strikes back for DNN verification}
	
\author{Yuke Liao \Letter \inst{1}\orcidID{0009-0004-3763-686X} \and
	Blaise Genest\inst{2,3}\orcidID{0000-0002-5758-1876} \and
	Kuldeep Meel\inst{4}\orcidID{0000-0001-9423-5270}
	\and
	Shaan Aryaman\inst{5}\orcidID{0000-0001-7576-0766}}
\authorrunning{Y. Liao, B. Genest, K. Meel, S. Aryaman}
%
\institute{CNRS@CREATE, Singapore \email{yuke.liao@cnrsatcreate.sg} 
\and CNRS@CREATE \& IPAL, Singapore \email{blaise.genest@cnrsatcreate.sg}\and
	CNRS, IPAL, France \email{blaise.genest@cnrs.fr}\\
 \and
	University of Toronto,  Toronto, Canada \\\email{meel@cs.toronto.edu}
	\and
	NYU Courant Institute of Mathematical Sciences, New York, USA\\ \email{aryaman.shaan@gmail.com}}
\maketitle              
%


	\begin{abstract}
		Branch and Bound (BaB) is considered as the most efficient technique for DNN verification: it can propagate bounds over numerous branches, 
		to accurately approximate values a given neuron can take even in large DNNs, enabling formal verification of properties such as local robustness. Nevertheless, the number of branches grows {\em exponentially} with important variables, and there are complex instances for which the number of branches is too large to handle even using BaB. In these cases, providing more time to BaB is not efficient, as the number of branches treated is {\em linear} with the time-out. Such cases arise with verification-agnostic DNNs, non-local properties (e.g. global robustness, computing Lipschitz bound), etc. 
				
        To handle complex instances, we revisit a divide-and-conquer approach to break down the complexity: instead of few complex BaB calls, we rely on many small {\em partial} MILP calls. The crucial step is to select very few but very important ReLUs to treat using (costly) binary variables. The previous attempts were suboptimal in that respect. To select these important ReLU variables, we propose a novel {\em solution-aware} ReLU scoring ({\sf SAS}), as well as adapt the BaB-SR and BaB-FSB branching functions as {\em global} ReLU scoring ({\sf GS}) functions. 
		We compare them theoretically as well as experimentally, and {\sf SAS} is more efficient at selecting a set of variables to open using binary variables.
		Compared with previous attempts, SAS reduces the number of binary variables by around 6 times, while maintaining the same level of accuracy. Implemented in {\em Hybrid MILP}, calling first $\alpha,\beta$-CROWN with a short time-out to solve easier instances, and then partial MILP, produces a very accurate yet efficient verifier, reducing by up to $40\%$ the number of undecided instances to low levels ($8-15\%$), while keeping a reasonable runtime ($46s-417s$ on average per instance), 
		even for fairly large CNNs with 2 million parameters.

		
	\end{abstract}

\section{Introduction}

Deep neural networks (DNNs for short) have demonstrated remarkable capabilities, achieving human-like or even superior performance across a wide range of tasks. However, their robustness is often compromised by their susceptibility to input perturbations \cite{szegedy}. This vulnerability has catalyzed the verification community to develop various methodologies, each presenting a unique balance between completeness and computational efficiency \cite{Marabou,Reluplex,deeppoly}. This surge in innovation has also led to the inception of competitions such as VNNComp \cite{VNNcomp}, which aim to systematically evaluate the performance of neural network verification tools. While the verification engines are generic, the benchmarks usually focus on local robustness, i.e. given a DNN, an image and a small neighbourhood around this image, is it the case that all the images in the neighbourhood are classified in the same way. 
For the past 5 years, VNNcomp has focused on rather easy instances, that can be solved within tens of seconds (the typical hard time-out is 300s). For this reason, DNN verifiers in the past years have mainly focused on optimizing for such easy instances. Among them, NNenum \cite{nnenum}, Marabou \cite{Marabou,Marabou2}, and PyRAT 
\cite{pyrat}, respectively 4th, 3rd and 2{nd} of the last VNNcomp'24 \cite{VNNcomp24}
and 5th, 2{nd} and 3rd  of the VNNcomp'23 \cite{VNNcomp23}; MnBAB \cite{ferrari2022complete}, 2{nd} in VNNcomp'22 \cite{VNNcomp22}, built upon ERAN \cite{deeppoly} and PRIMA \cite{prima}; and importantly, $\alpha,\beta$-CROWN \cite{crown,xu2020fast}, the winner of the last 4 VNNcomp, benefiting from branch-and-bound based methodology \cite{cutting,BaB}.
We will thus compare mainly with $\alpha,\beta$-CROWN experiments as gold standard in the following\footnote{GCP-CROWN \cite{cutting} is slightly more accurate than $\alpha,\beta$-CROWN on the DNNs we tested, but necessitates IBM CPLEX solver, which is not available to us}.

\begin{table}[t!]
	\centering
	\begin{tabular}{||l|c|c|c||c|c|c||}
		\hline
		Network & nbr & Accur.  & Upper & $\alpha,\beta$-CROWN& $\alpha,\beta$-CROWN & $\alpha,\beta$-CROWN \\ 
		Perturbation & activ. &  & Bound & TO=10s & TO=30s & TO=2000s\\ \hline
		MNIST 5$\times$100 & 500 & 99\% & 90\% & 33\% & 35\% & 40\%   \\
		$\epsilon = 0.026$ & ReLU & &  & 6.9s &  18.9s &  1026s  \\  \hline
		MNIST 5$\times$200 & 1000 & 99\%  & 96\%  & 46\%  & 49\%  & 50\%   \\ 
		$\epsilon = 0.015$ & ReLU & &  & 6.5s &  16.6s &  930s  \\  \hline
		MNIST 8$\times$100 & 800 & 97\%  & 86\%  & 23\%  & 28\%  & 28\%   \\
		$\epsilon = 0.026$ & ReLU &  &  & 7.2s &  20.1s &  930s  \\  \hline
		MNIST 8$\times$200 & 1600 & 97\%  & 91\%  & 35\%  & 36\%  & 37\%   \\ 
		$\epsilon = 0.015$ & ReLU & &  & 6.8s &  18.2s &  1083s  \\  \hline
		MNIST 6$\times$500 & 3000 & 100\%  & 94\%  & 41\%  & 43\%  & 44\%   \\ 
		$\epsilon = 0.035$ & ReLU & &  & 6.4s &  16.4s &  1003s  \\  \hline
		CIFAR CNN-B--Adv &  16634  &   78\%  & 62\%  &  34\% & 40\%  & 42\%   \\
		$\epsilon = 2/255$& ReLU &  &  & 4.3s & 8.7s & 373s  \\ \hline \hline
		CIFAR ResNet &   107496  & 29\%  & 25\%  & 25\%  & 25\%  & 25\%   \\
		$\epsilon = 8/255$ & ReLU &  &  & 2s & 2s & 2s  \\ \hline
	\end{tabular}
		\caption{Accuracy of DNN (class predicted vs ground truth), upper bound on robustness (robustness attacks found on remaining images), and $\%$ of images verified by $\alpha,\beta$-CROWN with different time-outs (TO) on 7 DNNs, and average runtime per image. The 6 first DNNs are complex instances. The last DNN (ResNet) is an easy instance (trained using Wong to be easy to verify, but with a very low accuracy level), provided for reference.
	}
\label{table_beta}
\end{table}

$\alpha,\beta$-CROWN, as well as BaBSR \cite{BaB} and MN-BaB \cite{ferrari2022complete},
rely on Branch and Bound technique (BaB), which call BaB once per output neuron (few calls). In the worst case, this involves considering all possible ReLU configurations, though branch and bound typically circumvents most possibilities. For easy instances, BaB is highly efficient as all branches can be pruned early. However, BaB methods hit a complexity barrier when verifying more complex instances, due to an overwhelming number of branches (exponential in the height of branches that cannot be pruned as they need too many variables to branch over). This can be clearly witnessed on the verification-agnostic \cite{SDPFI} DNNs of Table \ref{table_beta} (6 first DNNs), where vastly enlarging the time-out only enables to verify few more \% of images, leaving a large proportion ($20\%-50\%$) of images undecided despite the large runtime. As argued in \cite{SDPFI}, there are many situations (workflow, no access to the dataset...) where using specific trainers to learn easy to verify DNN is simply not possible, leading to  {\em verification-agnostic} networks, and such cases should be treated as well as DNNs specifically trained to be easy to verify, e.g. using \cite{TrainingforVerification}. Verification-agnostic are the simplest instances to demonstrate the scaling behavior of BaB on complex instances using standard local robustness implementations. Other complex instances include solving non-local properties, e.g. global robustness computed through Lipschitz bound \cite{lipshitz}, etc. The bottom line is that one cannot expect to have only easy instances to verify. It is important to notice that the number of activation functions of the DNN is a poor indicator of the hardness of the instance, e.g. $5 \times 100$ with 500 ReLUs is far more complex to certify ($50\%$ undecided images) than 100 times bigger ResNet ($0\%$ undecided images), see Table \ref{table_beta}.

\begin{table}[b!]
	\centering
	\begin{tabular}{||l|c|c||c|c|c||}
		\hline
		Network &  Accuracy & Upper  & Marabou 2.0 & NNenum &  Full MILP  \\ \hline
		MNIST 5$\times$100 & 99\% & 90\% & 28\% & $49\%$ & 40 \%    \\
		$\epsilon = 0.026$ & &  &6200s &  4995s & 6100s
		  \\  \hline
	\end{tabular}
	\caption{Result of non-BaB methods on the hard $5 \times 100$ with TO = 10 000s. 
		Only NNenum verifies more instances (9\% out of 50\% undecided images) than $\alpha,\beta$-CROWN (40\%), at the cost of a much larger runtime (4995s vs 1026s).
		}
\label{table_complete}
\end{table}

Other standard non-BaB methods such as Marabou, NNenum or a Full MILP encoding, show similar poor performance on such complex instances as well, even with a large 10 000s Time-out: Table \ref{table_complete} reveals that only NNenum succeeds to verify images not verified by $\alpha,\beta$-CROWN, limited to 9\% more images out of the 50\% undecided images on $5 \times 100$, and with a very large runtime of almost 5000s per image. It appeared pointless to test these verifiers on larger networks.

Our main contributions address the challenges to verify {\em complex} instances efficiently, as current methods are not appropriate to verify such instances:
\begin{enumerate}
	
	\item We revisit the idea from \cite{DivideAndSlide} to consider small calls to a partial MILP (pMILP) solver, i.e. with few binary variables encoding few ReLU functions exactly (other being encoded with more efficient but less accurate linear variables), to compute bounds for each neuron inductively, hence many ($O(n)$, the number of neuron) small calls, with a complexity exponential only in the few binary variables (Section 4). Compared to the few (one per output neuron) complex call to BaB, each with a worst case complexity exponential in the number of neurons of the DNN (which is far from the actual complexity thanks to pruning branches in BaB - but which can be too large as we shown in the 6 first DNNs of Table \ref{table_beta}). Two questions arise: how to select few very important ReLUs?, and is computing the bounds of intermediate neurons a good trade-off compared with the theoretical loss of accuracy due to selecting only some binary ReLUs? Answer to these questions were not looking very promising judging by previous attempt \cite{DivideAndSlide}, which was using a simple selection heuristic of nodes in the previous layer only.

	\item On the first question, we adapted from BaB-SR \cite{BaB} and FSB \cite{FSB}, which choose branching nodes for BaB, {\em global scoring ({\sf GS})} functions to choose important ReLUs (Section 5). These revealed to be much more accurate than the simple heuristic in 
	\cite{DivideAndSlide}. However, we also uncover that the {\em improvement function} that {\sf GS} tries to approximate depends heavily upon the mode of the ReLU function, and as this mode is unavailable to {\sf GS}, there are many cases in which {\sf GS} is far from the improvement (with both under or over-approximation of the improvement function). 

	\item We thus designed a {\em novel solution-aware scoring ({\sf SAS})}, which uses the solution of a unique LP call, that provides the mode to consider (Section 6). Theoretically, we show that {\sf SAS} is always an over-approximation of the improvement (Proposition \ref{prop2}), which implies that a small {\sf SAS} value implies that the ReLU is unnecessary. Experimentally, we further show that {\sf SAS} is very close to the actual improvement, closer than {\sf GS}, and that overall, the accuracy from SAS is significantly better than {\sf GS}.  Compared with the heuristic in \cite{DivideAndSlide},  {\sf SAS} is much more efficient ($\approx 6$ times less binary variables for same accuracy (Fig. \ref{fig_table3})). 
	
	\item Compared with many calls using full MILP, where all the ReLUs are encoded as binary variables, {\sf SAS} (and {\sf GS}) encode only a subset of ReLUs as binary and others as linear variables. The model in full MILP is fully accurate, while {\sf SAS} (and {\sf GS})
	are abstractions, and thus much faster to solve. 
	While full MILP is thus {\em asymptotically} more accurate than {\sf SAS} and {\sf GS}, {\em experimentally}, every reasonable time-out leads to much better practical accuracy of {\sf SAS} (and {\sf GS}) (see Fig. \ref{fig555}).

	
	\item For the second question, we propose a new verifier, called {\em Hybrid MILP}, invoking first 	$\alpha,\beta$-CROWN with short time-out to settle the easy instances. On those ({\em hard}) instances which are neither certified nor falsified, we call pMILP with few neurons encoded as binary variables. Experimental evaluation reveals that Hybrid MILP achieves a beneficial balance between accuracy and completeness compared to prevailing methods. It reduces the proportion of undecided inputs from $20-58\%$ ($\alpha,\beta$-CROWN with 2000s TO) to $8-15\%$, while taking a reasonable average time per instance ($46-420$s), Table \ref{table_hybrid}. It scales to fairly large networks such as CIFAR-10 CNN-B-Adv \cite{SDPFI}, with more than 2 million parameters.
\end{enumerate}

Limitation: We consider DNNs employing the standard ReLU activation function, though our findings can be extended to other activation functions, following similar extention by \cite{DivideAndSlide}, with updated MILP models e.g. for maxpool.

\subsection{Related Work} 

We compare Hybrid MILP with major verification tools for DNNs to clarify our methodology and its distinction from the existing state-of-the-art. It scales while preserving good accuracy, through targeting a limited number of binary variables, stricking a good balance between exact encoding of a DNN using MILP~\cite{MILP} (too slow) and LP relaxation (too inaccurate). MIPplanet~\cite{MIPplanet} opts for a different selection of binary variables, and execute one large MILP encoding instead of Hybrid MILP's many small encodings, which significantly reduce the number of binary variables necessary for each encoding. In \cite{DivideAndSlide}, small encodings are also considered, however with a straightforward choice of binary nodes based on the weight of outgoing edges, which need much more {binary variables} (thus runtime) to reach the same accuracy.

Compared with \cite{atva}, which uses pMILP in an abstraction refinement loop, 
they iteratively call pMILP to obtain bounds for the same output neuron, opening more and more ReLUs. This scales only to limited size DNN (500 neurons), because of the fact that many ReLUs need to be open (and then Gurobi takes a lot of time) and the iterative nature which cannot be parallelized, unlike our method which scales up to 20.000 neurons.


The fact that pure BaB is not that efficient for e.g. verification-agnostic (even very small) DNNs has been witnessed before \cite{MILP2}. The workaround, e.g. in {\em refined} $\alpha,\beta$-CROWN, was to precompute very accurate bounds for the first few neurons of the DNN using a complete full MILP encoding, and then rely on a BaB call from that refined bounds (more complex calls to full MILP and BaB). Non-surprisingly, this very slow technique does not scale but to small DNNs (max 2000 ReLU activation functions). Hybrid MILP on the other hand relies only on small calls: it is much more efficient on small DNNs, and it can scale to larger DNNs as well: we demonstrated strong performance with at least one order of magnitude larger networks (CNN-B-Adv).


Last, ERAN-DeepPoly \cite{deeppoly} computes bounds on values very quickly, by abstracting the weight of every node using two functions: an upper function and a lower function. While the upper function is fixed, the lower function offers two choices.
It relates to the LP encoding through Proposition \ref{LP} \cite{alessandro}: the LP relaxation precisely matches the intersection of these two choices. Consequently, LP is more accurate (but slower) than DeepPoly, and Hybrid MILP is considerably more precise. Regarding PRIMA \cite{prima}, the approach involves explicitly maintaining dependencies between neurons.



Finally, methods such as Reluplex / Marabou \cite{Reluplex,Marabou}  abstract the network: they diverge significantly from those abstracting values such as PRIMA, $\alpha,\beta$-CROWN)\cite{prima,crown}, Hybrid MILP. These network-abstraction algorithms are designed to be {\em intrinsically complete} (rather than asymptotically complete as BaB), but this comes at the price of significant scalability challenges, and in practice they time-out on complex instances as shown in Table \ref{table_complete}.

\section{Notations and Preliminaries}

In this paper, we will use lower case latin $a$ for scalars, bold $\boldsymbol{z}$ for vectors, 
capitalized bold $\boldsymbol{W}$ for matrices, similar to notations in \cite{crown}.
To simplify the notations, we restrict the presentation to feed-forward, 
fully connected ReLU Deep Neural Networks (DNN for short), where the ReLU function is $\ReLU : \mathbb{R} \rightarrow \mathbb{R}$ with
$\ReLU(x)=x$ for $x \geq 0$ and $\ReLU(x)=0$ for $x \leq 0$, which we extend componentwise on vectors.



An $\ell$-layer DNN is provided by $\ell$ weight matrices 
$\boldsymbol{W}^i \in \mathbb{R}^{d_i\times d_{i-1}}$
and $\ell$ bias vectors $\vb^i \in \mathbb{R}^{d_i}$, for $i=1, \ldots, \ell$.
We call $d_i$ the number of neurons of hidden layer $i \in \{1, \ldots, \ell-1\}$,
$d_0$ the input dimension, and $d_\ell$ the output dimension.

Given an input vector $\boldsymbol{z}^0 \in \mathbb{R}^{d_0}$, 
denoting $\hat{\boldsymbol{z}}^{0}={\boldsymbol{z}}^0$, we define inductively the value vectors $\boldsymbol{z}^i,\hat{\vz}^i$ at layer $1 \leq i \leq \ell$ with
\begin{align}
	\boldsymbol{z}^{i} = \boldsymbol{W}^i\cdot \hat{\boldsymbol{z}}^{i-1}+ \vb^i \qquad \, \qquad
	\hat{\boldsymbol{z}}^{i} = \ReLU({\boldsymbol{z}}^i).
\end{align} 

The vector $\hat{\boldsymbol{z}}$ is called post-activation values, 
$\boldsymbol{z}$ is called pre-activation values, 
and $\boldsymbol{z}^{i}_j$ is used to call the $j$-th neuron in the $i$-th layer. 
For $\boldsymbol{x}=\vz^0$ the (vector of) input, we denote by $f(\boldsymbol{x})=\vz^\ell$ the output. Finally, pre- and post-activation neurons are called \emph{nodes}, and when we refer to a specific node/neuron, we use $a,b,c,d,n$ to denote them, and $W_{a,b} \in \mathbb{R}$ to denote the weight from neuron $a$ to $b$. Similarly, for input $\boldsymbol{x}$, we denote by $\val_{\boldsymbol{x}}(a)$ the value of neuron $a$ when the input is $\boldsymbol{x}$. 

\medskip

Concerning the verification problem, we focus on the well studied local-robustness question. Local robustness asks to determine whether the output of a neural network will be affected under small perturbations to the input. 
Formally, for an input $\vx$ perturbed by $\varepsilon >0$ under distance $d$, then the DNN is locally $\varepsilon$-robust in $\vx$ whenever:

	\begin{align}
	\forall \boldsymbol{x'} \text{ s.t. } d(\vx,\vx')\leq \varepsilon, \text{ we have }  
	{argmax (f(\boldsymbol{x'})) = argmax(f(\boldsymbol{x}))}
\end{align}

\section{Value Abstraction for DNN verification}

In this section, we describe different value (over-)abstractions on $\vz$ that are used by efficient algorithms to certify robustness around an input $\vx$. Over-abstractions of values include all values for $\vz$ in the neighbourhood of $\vx$, and thus a certificate for safety in the over-abstraction is a proof of safety for the original input $\vx$.

\subsection{The Box Abstractions}

\definecolor{applegreen}{rgb}{0.55, 0.71, 0.0}

The concept of value abstraction involves calculating upper and lower bounds for the values of certain neurons in a Deep Neural Network (DNN) when inputs fall within a specified range. This approach aims to assess the network's robustness without precisely computing the values for every input within that range.

Firstly, it's important to note that weighted sums represent a linear function, which can be explicitly expressed with relative ease. However, the ReLU (Rectified Linear Unit) function presents a challenge in terms of accurate representation. Although ReLU is a relatively straightforward piecewise linear function with two modes (one for $x<0$ and another for $x \geq 0$), it is not linear. The complexity arises when considering the compounded effects of the ReLU function across the various layers of a ReLU DNN. It's worth noting that representing $\ReLU(x)$ precisely is feasible when $x$ is "{\em stable}", meaning it's consistently positive or consistently negative, as there's only one linear mode involved in each scenario. Consequently, the primary challenge lies in addressing "{\em unstable}" neurons, where the linearity of the function does not hold consistently.

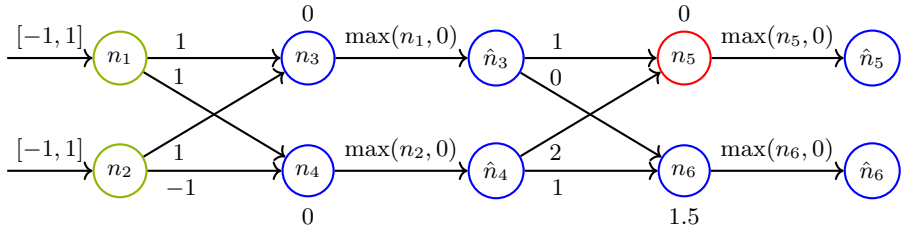
\begin{figure}[t!]
	\centering
	\begin{tikzpicture}
		
		\node[circle, draw= applegreen, thick, minimum width = 20,
		minimum height = 20] (input1) {$n_1$};

		\node[circle, draw= applegreen, thick, minimum width = 20,
		minimum height = 20] (input2) at ($(input1) + (0,-1.5)$) {$n_2$};
		
		
		\node (hidden10) at ($(input1) + (2.5,0.6)$) {$0$};
		
		\node (hidden20) at ($(input1) + (2.5,-1.5-0.6)$) {$0$};
		
		\node (hidden50) at ($(input1) + (7.5,0.6)$) {$0$};
		
		\node (hidden60) at ($(input1) + (7.5,-1.5-0.6)$) {$1.5$};

		\node[circle, draw= blue, thick, minimum width = 20,
		minimum height = 20] (hidden1) at ($(input1) + (2.5,0)$) {$n_3$};
		\node[circle, draw= blue, thick] (hidden2) at ($(input1) + (2.5,-1.5)$) {$n_4$};
		
		\node[circle, draw= blue, thick, minimum width = 20,
		minimum height = 20] (hidden3) at ($(input1) + (5,0)$){$\hat{n}_3$};
		\node[circle, draw= blue, thick] (hidden4) at ($(input1) + (5,-1.5)$) {$\hat{n}_4$};

		\node[circle, draw= red, thick, minimum width = 20,
		minimum height = 20] (hidden5) at ($(input1) + (7.5,0)$){$n_5$};
		\node[circle, draw= blue, thick] (hidden6) at ($(input1) + (7.5,-1.5)$) {$n_6$};

		\node[circle, draw= blue, thick, minimum width = 20,
		minimum height = 20] (output1) at ($(input1) + (10,0)$){$\hat{n}_5$};
		
		\node[circle, draw= blue, thick, minimum width = 20,
		minimum height = 20] (output2) at ($(input1) + (10,-1.5)$){$\hat{n}_{6}$};

		\draw[->,thick] ($(input1) + (-1.5,0)$) -- (input1) node[midway, above] {$[-1,1]$};
		
		\draw[->,thick] ($(input1) + (-1.5,-1.5)$) -- (input2) node[midway, above] {$[-1,1]$};

		\draw[->,thick] (input1) -- (hidden1) node[near start, above] {$1$};
		\draw[->,thick] (input1) -- (hidden2)node[near start, above] {$1$};
		
		\draw[->, thick] (input2) -- (hidden1) node[near start, below] {$1$};
		\draw[->, thick] (input2) -- (hidden2)node[near start, below] {$-1$};

		\draw[->, thick] (hidden1) -- (hidden3) node[midway, above] {$\max(n_1,0)$};
		\draw[->, thick] (hidden2) -- (hidden4) node[midway, above] {$\max(n_2,0)$};

		\draw[->,thick] (hidden3) -- (hidden5) node[near start, above] {$1$};			
		\draw[->,thick] (hidden3) -- (hidden6) node[near start, above] {$0$};
		
		\draw[->,thick] (hidden4) -- (hidden5)node[near start, below] {$2$};
		\draw[->,thick] (hidden4) -- (hidden6)node[near start, below] {$1$};

		\draw[->,thick] (hidden5) -- (output1) node[midway, above] {$\max(n_5,0)$};
		\draw[->,thick] (hidden6) -- (output2) node[midway, above] {$\max(n_6,0)$};


	\end{tikzpicture}
	\caption{A DNN. Every neuron is separated into 2 nodes, 
	$n$ pre- and $\hat{n}$ post-ReLU activation.} 
	\label{fig1}
\end{figure}

Consider the simpler abstraction, termed ``Box abstraction", recalled e.g. in \cite{deeppoly}: it inductively computes the bounds for each neuron in the subsequent layer independently. This is achieved by considering the weighted sum of the bounds from the previous layer, followed by clipping the lower bound at $\max(0,$ lower bound$)$ to represent the ReLU function, and so forth. 
For all $i \geq 3$, define $x_i=\val_{\vx}(n_i)$, where $\vx=(x_1,x_2)$.
Taking the DNN example from Fig \ref{fig1}, assume $x_1,x_2 \in [-1,1]$. This implies that $x_3,x_4 \in [-2,2]$. After applying the ReLU function, $\hat{x}_3,\hat{x}_4$ are constrained to {$[1.5,3.5]$} , leading to $x_5 \in [0,6]$ and $x_6 \in [0,2]$. 
The bounds for $n_1, \ldots, n_4$ are exact, meaning for every $\alpha$ within the range, an input $\vy$ can be found such that $\val_{\vy}(n_i)=\alpha$. However, this precision is lost from the next layer (beginning with $n_5, n_6$) due to potential dependencies among preceding neurons. For example, it is impossible for $x_5=\Val_{\vx}(n_5)$ to reach $6$, as it would necessitate both $x_3=2$ and $x_4=2$, which is not possible at the same time as 
$x_3=2$ implies $x_1=x_2=1$ and $x_4=2$ implies $x_2=-1$ (and $x_1=1$), a contradiction.

In \cite{DBLP_Ehlers17,deeppoly} and others, the {\em triangular abstraction} was proposed:
\begin{align}
	\ReLU(x) = \max (0,x) \leq \hat{x} \leq \UB \frac{x-\LB}{\UB-\LB} 
	\label{eq:deeppoly}
\end{align} 
It has two lower bounds (the 0 and identity functions), and one upper bound. 
DeepPoly \cite{deeppoly} chooses one of the two lower bounds for each neuron $x$, giving rise to a greedy quadratic-time algorithm to compute very fast an abstraction of the value of $\hat{x}$ (but not that accurately).

\subsection{MILP and LP encodings for DNNs}

At the other end of the spectrum, we find the Mixed Integer Linear Programming (MILP) value abstraction, which is a complete (but inefficient) method. 
Consider an unstable neuron $n$, whose value $x \in [\LB,\UB]$ with $\LB<0<\UB$.
The value $\hat{x}$ of $\ReLU(x)$ can be encoded exactly in an MILP formula with one 
integer (actually even binary) variable $a$ valued in $\{0,1\}$, using constants $\UB,\LB$ with 4 constraints \cite{MILP}:

\vspace{-0.4cm}
\begin{equation}\quad \hat{x} \geq x \quad \wedge \quad \hat{x} \geq 0 \quad \wedge \quad \hat{x} \leq \UB \cdot a \quad \wedge \quad \hat{x} \leq x-\LB \cdot (1-a)
\label{eq11}
\end{equation}

For all $x \in [\LB,\UB] \setminus 0$, there exists a unique solution $(a,\hat{x})$ that meets these constraints, with $\hat{x}=\ReLU(x)$ \cite{MILP}. The value of $a$ is 0 if $x < 0$, and 1 if $x>0$, and can be either if $x=0$. This encoding approach can be applied to every (unstable) ReLU node, and optimizing its value can help getting more accurate bounds. However, for networks with hundreds of {\em unstable} nodes, the resulting MILP formulation will contain numerous {binary variables} and generally bounds obtained will not be accurate, even using powerful commercial solvers such as Gurobi.

MILP instances can be linearly relaxed into LP over-abstraction, where variables originally restricted to integers in $\{0,1\}$ (binary) are relaxed to real numbers in the interval $[0,1]$, while maintaining the same encoding. As solving LP instances is polynomial time, this optimization is significantly more efficient. However, this efficiency comes at the cost of precision, often resulting in less stringent bounds. This approach is termed the {\em LP abstraction}. We invoke a folklore result on the LP relaxation of (\ref{eq11}), for which we provide a direct and explicit proof:


\begin{proposition}
	\cite{alessandro}
	\label{LP}
	The LP relaxation of (\ref{eq11}) is equivalent with the triangular abstraction 	(\ref{eq:deeppoly}).
\end{proposition}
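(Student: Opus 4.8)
The plan is to prove the equivalence by \emph{projecting out} the relaxed variable $a$ from the LP system via Fourier--Motzkin elimination, and then checking that the surviving constraints on $(x,\hat x)$ are exactly those of the triangular abstraction (\ref{eq:deeppoly}). Both abstractions are sets of pairs $(x,\hat x)$ with $x\in[\LB,\UB]$; the LP-relaxation set is $\{(x,\hat x) : \exists\, a\in[0,1] \text{ satisfying } (\ref{eq11})\}$, so ``equivalent'' means these two projections onto the $(x,\hat x)$-plane coincide.

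First I would observe that $a$ occurs only in the last two constraints of (\ref{eq11}), namely $\hat x \leq \UB\cdot a$ and $\hat x \leq x-\LB(1-a)$, whereas $\hat x\geq x$ and $\hat x\geq 0$ (equivalently $\hat x\geq\max(0,x)$) are independent of $a$ and therefore pass unchanged to the projection, already supplying the lower bound of (\ref{eq:deeppoly}). Rewriting the two $a$-dependent constraints as bounds on $a$ gives $a\geq \hat x/\UB$ (using $\UB>0$) and $a\leq(\hat x-x+\LB)/\LB$ (using $\LB<0$, which reverses the inequality). A feasible $a\in[0,1]$ exists iff $\max\!\big(0,\hat x/\UB\big)\leq\min\!\big(1,(\hat x-x+\LB)/\LB\big)$.

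The core step is to expand this single compatibility condition into its component inequalities and identify the binding one. The cross inequality $\hat x/\UB \leq (\hat x-x+\LB)/\LB$, after multiplying through by $\UB\LB<0$ and collecting terms, rearranges exactly to $\hat x\leq \UB\,(x-\LB)/(\UB-\LB)$, i.e.\ the upper bound of the triangular abstraction. Of the remaining conditions, $0\leq 1$ is trivial, and I would show the two auxiliary ones, $\hat x\leq x-\LB$ and $\hat x\leq\UB$, are redundant: since $x\in[\LB,\UB]$ one has $\UB/(\UB-\LB)\leq 1$ and $(x-\LB)/(\UB-\LB)\leq 1$, so the triangular upper bound is dominated by both $x-\LB$ and $\UB$ and contributes no extra cut.

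The main care point is purely the sign bookkeeping: both $\LB<0$ and $\UB-\LB>0$ force inequality reversals at several divisions, and one must confirm that the two auxiliary Fourier--Motzkin constraints are genuinely redundant rather than carving out a strictly smaller region --- otherwise the LP relaxation would be strictly stronger than (\ref{eq:deeppoly}) and the proposition would fail. Once redundancy is verified, combining the surviving lower bound $\hat x\geq\max(0,x)$ with the surviving upper bound $\hat x\leq\UB(x-\LB)/(\UB-\LB)$ yields precisely (\ref{eq:deeppoly}), establishing the equivalence of the two feasible sets in both directions.
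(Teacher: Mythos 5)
Your proof is correct, and it takes a genuinely different route from the paper's. The paper argues via the upper envelope: it notes the lower bound $\hat x \geq \max(0,x)$ is immediate, then treats the two $a$-dependent constraints as the function $a \mapsto \min(\UB\cdot a,\ x-\LB(1-a))$ and maximizes it over $a\in[0,1]$ by a calculus argument, finding the maximum at the crossing point $a=\frac{x-\LB}{\UB-\LB}$, which yields $\hat x \leq \UB\frac{x-\LB}{\UB-\LB}$ and the claim that this bound is attained. You instead perform Fourier--Motzkin elimination of $a$: rewriting the constraints as $\hat x/\UB \leq a \leq (\hat x - x + \LB)/\LB$ (with the correct sign reversal since $\LB<0$), extracting the cross inequality (which rearranges to exactly the triangular upper bound --- your algebra checks out, since multiplying by $\UB\LB<0$ reverses the inequality), and then verifying that the two auxiliary cross constraints $\hat x \leq x-\LB$ and $\hat x \leq \UB$ are dominated by the triangular bound whenever $x\in[\LB,\UB]$, using $\UB/(\UB-\LB)\leq 1$ (equivalent to $\LB\leq 0$) and $(x-\LB)/(\UB-\LB)\leq 1$ (equivalent to $x\leq\UB$). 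What your approach buys is a fully mechanical two-way set equality: the projection argument automatically certifies both inclusions and makes explicit the redundancy check that the paper's envelope argument handles only implicitly through the restriction $a\in[0,1]$ (indeed, the paper never explicitly verifies that the maximizing $a^*=\frac{x-\LB}{\UB-\LB}$ lies in $[0,1]$, which is where your auxiliary constraints reappear in disguise). The paper's argument, in exchange, is shorter and gives direct geometric intuition that the triangular upper edge is the reachable envelope of the relaxed system. Both establish the proposition.
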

 

\begin{proof}
	Consider an unstable neuron $n$, that is $\LB < 0 < \UB$.
The lower bound on $\hat{x}$ is simple, as $\hat{x} \geq 0 \wedge \hat{x} \geq x$ is immediately equivalent with $\hat{x} \geq \ReLU(x)$.

We now show that the three constraints 
$\hat{x} \leq \UB \cdot a \, \wedge \, \hat{x} \leq x-\LB \cdot (1-a) \, \wedge \, a \in [0,1]$ translates into $\hat{x} \leq \UB \frac{x-\LB}{\UB-\LB}$. 
We have $\hat{x}$ is upper bounded by $max_{a \in [0,1]} (min(\UB \cdot a, x - \LB (1-a)))$, and this bound can be reached. Furthermore, using standard function analysis tools (derivative...), we can show that the function $a \mapsto \min(\UB \cdot a, x - \LB (1-a))$ attains its maximum when $\UB \cdot a = x - \LB (1-a)$, leading to the equation $(\UB - \LB) a = x - \LB$ and consequently $a = \frac{x - \LB}{\UB-\LB}$. This results in an upper bound $\hat{x} \leq \UB \frac{x - \LB}{\UB-\LB}$, which can be reached, hence the equivalence.
\qed
\end{proof}

\section{Partial MILP}

In this paper, we revisit the use of {\em partial MILP} (pMILP for short, see \cite{DivideAndSlide}), to get interesting trade-offs between accuracy and runtime.
Let $X$ be a subset of the set of unstable neurons, and $n$ a neuron for which we want to compute upper and lower bounds on values: pMILP$_X(n)$ simply calls Gurobi to minimize/maximize the value of $n$ with the MILP model encoding (equation (2) Section 3.2), where variable $a$ is:
\begin{itemize}
\item binary for neurons in $X$ (exact encoding of the ReLU),
\item linear for neurons not in $X$ (linear relaxation).
\end{itemize}

Formally, we denote by MILP$_X$ the MILP encoding where the variable $a$ encoding $\ReLU(y)$ is binary for $y\in X$, and linear for $y\notin X$ ($y$ being unstable or stable). 
We say that nodes in $X$ are {\em opened}. 
That is, if $X$ is a strict subset of the set of all unstable neurons, then 
MILP$_X$ is an abstraction of the constraints in the full MILP model. If $X$ covers all unstable neurons, then MILP$_X$ is exact as there is only one linear mode for stable variables. The worst-case complexity to solve MILP$_X$ is NP-complete in $|X|$, ie the number of binary variables.

\smallskip

To reduce the runtime, we will limit the size of subset $X$. This a priori hurts accuracy. To recover some of this accuracy, we use an iterative approach similar to the box abstraction or DeepPoly \cite{deeppoly}, computing lower and upper bounds $\LB,\UB$ for neurons $n$ of {each layer} iteratively, that are used when computing values of the next layer.
So we trade-off the NP-complexity in $|X|$ with a linear complexity in the number of neurons. Notice that all the neurons of a layer can be treated in parallel.

\medskip


\SetKwInput{KwInput}{Input}
\SetKwInput{KwOutput}{Output}

\begin{algorithm}[t!]
	\caption{pMILP$_K$}
	\label{algo1}
	\KwInput{Bounds $[\LB(m),\UB(m)]$ for nodes $m$ at layer $0$}
	
	\KwOutput{Bounds $[\LB,\UB]$ for every node $n$}
	
	\For{layer $k=1, \ldots, \ell$}{
		\For{neuron $n$ in layer $k$}{
			
			Compute $X$ a set of $K$ nodes most important for target neuron $n$.
			
			Run pMILP$_X(n)$ to obtain $[\LB(n),\UB(n)]$ using $\MILP_X$ and additional constraints $\bigvee_{m \text{ in layer } <k} \val(m) \in [\LB(m),\UB(m)]$.
		}
	}
\end{algorithm}	


We provide the pseudo code for pMILP$_K$ in Algorithm \ref{algo1}.
pMILP$_K$ has a worst case complexity bounded by $O(N \cdot \MILP(N,K))$, 
where $N$ is the number of nodes of the DNN, and $\MILP(N,K)$ is the complexity of solving a MILP program with $K$ {binary variables} and $N$ linear variables.
We have $\MILP(N,K) \leq 2^K \LP(N)$ where $\LP(N)$ is the Polynomial time to solve a Linear Program with $N$ variables, $2^K$ being an upper bound. Solvers like Gurobi are quite adept and usually do not need to evaluate all $2^K$ ReLU configurations to deduce the bounds.
It is worth mentioning that the "for" loop iterating over neurons $n$ in layer $k$ (line 2) can be executed in parallel, because the computation only depends on bounds from preceding layers, not the current layer $k$. This worst-case complexity compares favorably when $K<<N$ with the worst-case complexity of BaB which would be $O(out \times 2^N)$, where $out$ is the number of output neurons. Of course, actual complexity is always much better than the worst case, thanks to different heuristics, such as pruning, which is what happens for easy instances, for which the linear cost $N$ in pMILP (which cannot be avoided) is actually the limiting factor. Only experimental results can tell whether the accuracy lost using reduced variables in pMILP can be a good trade off vs efficiency, both depending on the number $K$ chosen.

If $K$ is sufficiently small, 
this approach is expected to be efficient. 
The crucial factor in such an approach is to {\em select} few opened ReLU nodes in $X$ which are the most important for the accuracy. An extreme strategy was adopted in 
\cite{DivideAndSlide}, where only ReLU nodes of the immediate previous layer can be opened, and the measure to choose ReLU $a$ when computing the bounds for neuron $b$ was to consider $|W_{ab}| (\UB(a)-\LB(a))$.

\section{Global Scoring Functions}

\label{sec4p5}


To select nodes $X$ for pMILP$_X$, we first adapt from the BaB-SR \cite{BaB} and FSB \cite{FSB} functions, 
 which originally iteratively select one node to branch on for BaB.
 Intuitively, we extract the scoring $s_{SR}$ and $s_{FSB}$ from both BaB-SR and FSB, as the 
 BaB bounding step is not adapted for pMILP. 
 We will call such functions {\em  global scoring} (GS) functions.

 They both work by backpropagating gradients vectors $\bm{\lambda}$ from the neurons under consideration in layer $n$, back to neurons to be potentially selected. To do so, they consider the rate of $\ReLU(\bm{u}_k)$ to be 
$r(\bm{u}_k)=\frac{\max(0,\UB(\bm{u}_k))}{\max(0,\UB(\bm{u}_k))-\min(0,\LB(\bm{u}_k))} \in [0,1]$, 
with $r(b)=0$ iff $\UB(b)\leq 0$ and $r(b)=1$ iff $\LB(b)\geq 0$.

\begin{align}
\bm{\lambda}_{n-1} = -{(\bm{W}^n)}^T\bm{1}, \hspace*{4ex}  	\bm{\lambda}_{k-1} = {(\bm{W}^k)}^T\big( r(\bm{u}_k) \odot\bm{\lambda}_{k}\big) \hspace*{4ex}  k\in [n-1,2]
\end{align}

Then, the scoring functions $\bm{s}_{SR}$ and $\bm{s}_{FSB}$ for ReLUs in layer $k$ are computed by approximating how each node would impact the neurons in layer $n$, using the computed $\bm{\lambda}$, differeing only in how they handle the bias $\bm{b}_k$:

\begin{align*}
	\bm{s}_{SR}(k) =& 
	\Biggl\lvert r(\bm{u}_k) \odot \LB(\bm{u}_k) \odot \max(\bm{\lambda}_{k},0)
	+ \max\{0,\bm{\lambda}_{k}\odot\bm{b}_{k}\}-r(\bm{u}_k) \odot\bm{\lambda}_{k}\odot\bm{{b}}_{k}
	\Biggr\rvert  \\
	\bm{s}_{FSB}(k) =& \Biggl\lvert r(\bm{u}_k) \odot \LB(\bm{u}_k) \odot \max(\bm{\lambda}_{k},0)
	+ \min\{0,\bm{\lambda}_{k}\odot\bm{b}_{k}\}-r(\bm{u}_k) \odot\bm{\lambda}_{k}\odot\bm{{b}}_{k}
	\Biggr\rvert
\end{align*}

	Then, ReLU are ranked using these scores, to select the most important unstable ReLUs
	(with $\LB(u_k) < 0 < \UB(u_k)$).

\noindent {\em Running Example:} Consider Fig. \ref{img:FSB_example}. 
It has no bias, so $s_{SR}=s_{FSB}$. We have $\lambda(b)=d$, $\lambda(a)=\frac{cd}{2}$.
The value of $s_{FSB}(a)$ depends on the signs of $c,d$.

We perform a novel comparison between $s_{FSB}(a)$ and $\Delta(z)$, the difference on the {\em maximal bound} computed by pMILP$_X(z)$ when opening the $\ReLU$ of node $a$ ($X=\{a\}$), yielding $\val(\hat{a}) = \ReLU(\val(a))$, versus having $X=\emptyset$, for which 
$\val(\hat{a})$ can be any value in the triangle approximation (Prop. \ref{LP}).

The most favorable cases are $c < 0 < d$ and $d < 0 < c$: 
as $cd <0$, we have $s_{FSB}(a)=\max(0,\frac{cd}{4}) = 0$.
Because $cd <0$, both $a$ and $\hat{a}$ need to be minimized by MILP$_X$ in order to maximize the value of $z$. For $X=\emptyset$, the LP approximation (=MILP$_\emptyset$) will thus 
set $\val(\hat{a}) = \ReLU(\val(a))$ as this is the minimum value in the triangle approximation. Notice that opening $X=\{a\}$ yields the same $\val(\hat{a}) = \ReLU(\val(a))$.
That is, opening node $a$ is not improving the bound $\UB(z)$, as correctly predicted by the score $s_{FSB}(a)= \Delta(a) = 0$.

Case $c>0,d>0$: we have $s_{FSB}(a)=\frac{cd}{4}$.
The value of $a,\hat{a},b,\hat{b}$ should be maximized to maximize the value of $z$, because $W_{bz}=d>0$ and $W_{ab}=c>0$. 
Now, let us call $val(a)$ the maximum value for $a$ 
(same for MILP$_\emptyset$ and MILP$_{\{a\}}$).
The maximum value for $\hat{a}$ under LP ($X=\emptyset$) is 
$\frac{1}{2} \cdot (\val(a)-LB(a))$ following the triangle approximation. 
Now, if $X=\{a\}$, then the value of $\hat{a}$ is $\ReLU(\val(a))$: the difference 
$\Delta(\hat{a})$ is between 0 (for $\val(a)\in \{LB,UB\}$) and $\frac{1}{2}$ (for $\val(a)=0$). The difference $\Delta(b)$ for the value of $b$ is thus between 0 and $\frac{c}{2}$, which means $\Delta(\hat{b}) \in [0, \frac{c}{4}]$, using the upper function of the triangle approximation of rate $r(b)=\frac{1}{2}$, as the value of $\hat{b}$ should be maximized.
This means a difference $\Delta(z) \in [0, \frac{cd}{4}]$ 
(depending on the maximal value $val(a)$), to compare with the fixed 
$s_{FSB}(a)=\frac{cd}{4}$.

\begin{figure}[t!]
	\centering
	\begin{tikzpicture}
		
		\node (input1) {};
		
		
		
		\node (hidden10) at ($(input1) + (2,0.6)$) {$[-1,1]$};
		
		\node (hidden20) at ($(input1) + (2.5,-1.5-0.6)$) {};
		
		\node (hidden50) at ($(input1) + (7.5,0.6)$) {$[-1,1]$};
		
		\node (hidden60) at ($(input1) + (7.5,-1.5-0.6)$) {};

		\node[circle, draw= black, thick, minimum width = 20,
		minimum height = 20] (hidden1) at ($(input1) + (2,0)$) {$a$};
		
		\node[circle, draw= black, thick, minimum width = 20,
		minimum height = 20] (hidden3) at ($(input1) + (5,0)$){$\hat{a}$};
		\node (hidden4) at ($(input1) + (5.5,-1)$) {};

		\node[circle, draw= black, thick, minimum width = 20,
		minimum height = 20] (hidden5) at ($(input1) + (7.5,0)$){$b$};

		\node[circle, draw= black, thick, minimum width = 20,
		minimum height = 20] (output1) at ($(input1) + (10,0)$){$\hat{b}$};
		
		\node (output2) at ($(input1) + (10.5,-1)$){};
		
			\node[circle, draw= black, thick, minimum width = 20,
		minimum height = 20] (output3) at ($(input1) + (12.5,0)$){$z$};

		
		
		
		\draw[->,thick] (input1) -- (hidden1) node[midway, above] {};
		

		\draw[->, thick] (hidden1) -- (hidden3) node[midway, above] {open ReLU?};

		\draw[->, thick] (hidden3) -- (hidden5) node[midway, above] {{\color{red}$c$}};			
	
		\draw[->,thick] (hidden4) -- (hidden5)node[near start, above] {};

		\draw[->,thick] (hidden5) -- (output1) node[midway, above] {LP ReLU};
		
		\draw[->,thick] (output1) -- (output3)node[midway, above] {{\color{red}$d$}};
		\draw[->,thick] (output2) -- (output3)node[near start, below] {};

	\end{tikzpicture}
	\vspace{-0.8cm}
	\caption{A running example with parametric weights {\color{red}$c$} and {\color{red}$d$}}
	\label{img:FSB_example}
\end{figure}
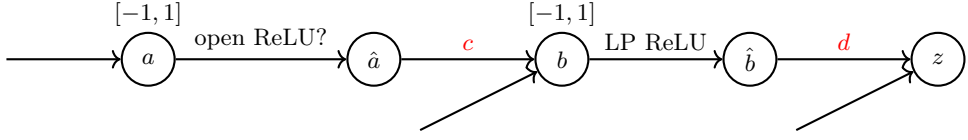

The last case is the most problematic: 
Case $c<0,d<0$, implying that $s_{FSB}(a)=\frac{cd}{4}$, because $cd >0$.
Opening $\ReLU(a)$ will have the same impact on the value of $\hat{a},b$ 
than in case $c>0,d>0$, with $\Delta(b) \in [0,\frac{c}{2}]$.
However, as $d<0$, value of $\hat{b}$ needs to be minimized to maximize the value of $z$. That is, the value of $\hat{b}$ will be the value of $\ReLU(sol(b))$, and the change 
$\Delta(\hat{b})$ will be either 0 in case $\val(b)<0$,
or $\Delta(b) \in [0,\frac{c}{2}]$ for $\val(b)>0$.
That is, $z$ will be modified by either $\Delta(z)=0$ or $\Delta(z) = d \Delta(b)$, to be compared with the fix value $s_{FSB}(a)=\frac{cd}{4}$, which is not always an overapproximation: we have $\Delta(z)=\frac{cd}{2} > s_{FSB}(a)$, 
if $\val(a)=0$ and $\val(b)>0$.

We call {\em global scoring} (GS) these functions $s_{FSB},s_{SR}$  because they score ReLUs as {accurately} as possible, considering that they do not have access to the values 
$\val(a),\val(b)$ maximizing $z$. Following this analysis of $s_{FSB},s_{SR}$, 
next Section presents a novel scoring function more accurate wrt $\Delta(z)$.

\section{Solution-Aware Scoring.}

\label{sec4}

In this section, we propose {\em Solution-Aware Scoring} (SAS),
to evaluate accurately how opening a ReLU impacts the accuracy.
To do so, SAS considers explicitly a solution to a unique LP call, which is reasonably fast to obtain as there is no binary variables (polynomial time). 
Assume that we want to compute an upper bound for neuron $z$ on layer $\ell_z$.
We write $n < z$ if neuron $n$ is on a layer before $\ell_z$, and $n \leq z$ if $n< z$ or $n=z$. We denote ($\Sol\_\max_X^z(n))_{n \leq z}$ a solution of $\mathcal{M}_X$ maximizing $z$: $\Sol\_\max_X^z(z)$ is the maximum of $z$ under $\mathcal{M}_X$.

Consider $(\sol(n))_{n \leq z} = (\Sol\_\max_\emptyset^z(n))_{n \leq z}$, a solution maximizing the value for $z$ when all ReLU use the LP relaxation.
Function
$\Improve\_\max^z(n)=$ $\sol(z) - \Sol\_\max_{\{n\}}^z(z)$, 
accurately represents how much opening neuron $n < z$ reduces the maximum computed for $z$
compared with using only LP. 
We have $\Improve\_\max^z(n)\geq 0$ as $\Sol\_\max_{\{n\}}^z$ fulfills all the constraints of 
$\mathcal{M}_\emptyset$, so $\Sol\_\max_{\{n\}}^z(z) \leq \sol(z)$.
Computing exactly $\Improve\_\max^z(n)$ would need a MILP call on $\mathcal{M}_{\{n\}}$ for every neuron $n \leq z$, which would be very time consuming. Instead, the SAS function uses a (single) LP call to compute $(\sol(n))_{n \leq z}$, with negligible runtime wrt the forthcoming  $\MILP_X$ call, and yet accurately approximates $\Improve\_\max^z(n)$ 
(Fig. \ref{fig_table3}).

For a neuron $b$ on the layer before layer $\ell_z$, we define:

\vspace{-0.5cm}
\begin{align}
		\Utility\_\max\nolimits^z(b) = W_{bz} \times (\sol(\hat{b})- \ReLU(\sol(b)))
\end{align}
\vspace{-0.5cm}
	

\noindent {\em Comparison:} consider $b$ with $W_{bz}<0$: 
to maximize $z$, the value of $\sol(\hat{b})$ is minimized by LP, 
ie $\sol(\hat{b})=\ReLU(\sol(b))$ thanks to Proposition~\ref{LP}. 
Thus, we have $\Utility\_\max^z(b)=0=\Improve_{max}^z(b)$.
Notice that the original scoring function $|W_{bz}|(\UB(b)-\LB(b))$ \cite{DivideAndSlide}  would be possibly very large in this case. However, {\sf GS} scoring functions from BaB-SR and FSB would also accurately compute  $s_{FSB}(b)=s_{SR}=0$.
Notice that $\Utility$ does not need to consider bias explicitly, unlike $s_{FSB},s_{SR}$,
as they are already accounted for in the solution considered. 

\medskip

Consider a neuron $a$ two layers before $\ell_z$, 
$b$ denoting neurons in the layer $\ell$ just before $\ell_z$.
Recall the rate $r(b)=\frac{\max(0,\UB(b))}{\max(0,\UB(b))-\min(0,\LB(b))} \in [0,1]$.
We define:

\newpage

\begin{flalign}
	\Delta(\hat{a}) &= \ReLU(\sol(a))-\sol(\hat{a})&&\\
	\forall b \in \ell, \Delta(b) &= W_{ab}\Delta(\hat{a})&&
\end{flalign}

\vspace{-0.6cm}

\begin{subnumcases}{\forall b \in \ell, \Delta(\hat{b}) =}
		r(b)\Delta(b), & for $W_{bz} > 0$ \\
		\max(\Delta(b),-\sol(b)), & for $W_{bz} < 0$ and $\sol(b)\geq0$\\
		\max(0,\Delta(b)+\sol(b)), & for $W_{bz} < 0$ and $\sol(b)<0$ \quad \, \quad \, \quad		 
\end{subnumcases}
\vspace{-0.4cm}
\begin{flalign}
	\Utility\_\max\nolimits^z(a) &= \Delta(z) = -\sum_{b \in \ell} W_{bz} \Delta(\hat{b})&&
\end{flalign}

\begin{figure}[t!]
	\begin{centering}
	\begin{tikzpicture}[scale=1, >=stealth]
		
		\draw[->] (-5,0) -- (4,0) node[right] {$b$};
		\draw[->] (0,-1) -- (0,3) node[above] {$\hat{b}$};
		
		\draw[line width=0.4mm, blue] (-3,0) -- (0,0);
		\draw[thick, blue] (0,0) -- (2.5,2.5) node[below, shift={(0.35,-0.55)}] {$\hat{b} = \ReLU(b)$};
		\draw[thick, blue] (-3,0) -- (2.5,2.5) node[above, shift={(-5.7,-2)}] {$\hat{b} = r(b) (b-\LB)$};
		
		\draw[dashed] (2.5,0) -- (2.5,2.5) -- (0,2.5); 
		\node[below left] at (0,0) {};
		
		
		\foreach \x in {2.5}
		\draw[shift={(\x,0)}] (0,0.1) -- (0,-0.1) node[below] {$\UB$};
		\foreach \x in {-3}
		\draw[shift={(\x,0)}] (0,0.1) -- (0,-0.1) node[below] {$\LB$};
		
		\foreach \y in {2.5}
		\draw[shift={(0,\y)}] (0.1,0) -- (-0.1,0) node[left] {$\UB$};
		
		\draw[<->, thick] (0.1, 0.1) -- (0.9, 0.9) node[above,shift={(0.65,-0.45)}] {$case$ (b)}
		node[below,shift={(-0.1,-1.10)}] {$sol(b)\Delta(b)$}
		node[above,shift={(-1.35,-0.55)}] {$\Delta(\hat{b})$};
		\filldraw[black] (0.5, 0.5) circle (1.5pt);
		\filldraw[black] (0.5, 0) circle (1.5pt);
		\draw[<->, thick] (0.1, 0) -- (0.9, 0);
		\draw[dashed] (0.5,0) -- (0.5,1.6); 

		\draw[<->, thick] (-0.5, 0) -- (-1.5, 0) node[above,shift={(0.6,0)}] {$case$ (c)}
		node[below,shift={(0.6,-0.15)}] {$sol(b)\Delta(b)$};
		\filldraw[black] (-1, 0) circle (1.5pt);

		\draw[<->, thick] (0.1, 1.375) -- (0.9, 1.75) node[above,shift={(-0.3,0)}] {$case$ (a)}
		node[above,shift={(-1.4,-0.45)}] {$\Delta(\hat{b})$};
		\filldraw[black] (0.5, 1.6) circle (1.5pt);
		
		\draw[<->, thick] (0, 0.1) -- (0, 0.9);
		
		\draw[<->, thick] (0, 1.375) -- (0, 1.75);

	\end{tikzpicture}
	\caption{Different cases for $\ReLU(b)$
}
	\label{node:b}
\end{centering}
\end{figure}

%
%
%
%



\noindent {\em Comparison:} First, the original \cite{DivideAndSlide} does not propose a formula for node $a$ two layers before $z$. So we will compare {\sf SAS} with {\sf GS}.
Consider again the running example of Fig. \ref{img:FSB_example}. 

In the case $c\cdot d<0$, we have $\Utility\_\max\nolimits^z(a)=\Improve_{max}^z(b)=s_{FSB}(a)=s_{SR}(a)=0$, as $\Delta(\hat{a})=0$. 

In the case $c>0,d>0$, 
$\Delta(\hat{a}) = \ReLU(\sol(a))-\sol(\hat{a})$ is precise,
whereas the corresponding $\Delta_{FSB}(\hat{a}) = \LB(a) r(a)$ is only an upperbound.

The last case $c<0,d<0$ is the most extreme: 
$\Delta(\hat{b})$ adapts to the case (b),(c) in Fig. \ref{node:b} leveraging the value $\sol(b)$, which yields very different values, whereas the corresponding $\Delta_{FSB}(\hat{b})$
is always $r(b) \Delta_{FSB}(b)$: 
\begin{itemize}
\item For $sol(b)<<0$, we will have 
$\Delta(\hat{b})=0<\Delta_{FSB}(\hat{b})$.
\item For $sol(b)>>0$, we will have 
$\Delta(\hat{b})=\Delta(b) > \frac{1}{2}\Delta_{FSB}(\hat{b})$ 
as $r(b)=\frac{1}{2}$.
\end{itemize}

Further, we can show that $\Utility$ is a safe overapproximation of 
$\Improve\_\max^z(a)$, which does not hold for $s_{FSB},s_{SR}$ (because of the case $sol(b)>>0$):

\begin{proposition}
	\label{prop2}
		$0 \leq \Improve\_\max^z(a) \leq \Utility\_\max^z(a)$. 
\end{proposition}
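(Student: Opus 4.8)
The plan is to keep the already-established lower bound $0 \le \Improve\_\max^z(a)$ (which, as noted in the text, follows from $\Sol\_\max_{\{a\}}^z$ being feasible for $\mathcal{M}_\emptyset$, so $\Sol\_\max_{\{a\}}^z(z) \le \sol(z)$) and to concentrate on the upper bound $\Improve\_\max^z(a) \le \Utility\_\max^z(a)$. Since $\Improve\_\max^z(a) = \sol(z) - \Sol\_\max_{\{a\}}^z(z)$ and $\Sol\_\max_{\{a\}}^z(z)$ is the \emph{maximum} of $z$ over $\mathcal{M}_{\{a\}}$, it suffices to exhibit one point $\sigma$ feasible for $\mathcal{M}_{\{a\}}$ with $\sigma(z) \ge \sol(z) - \Utility\_\max^z(a)$: optimality then gives $\Sol\_\max_{\{a\}}^z(z) \ge \sigma(z) \ge \sol(z) - \Utility\_\max^z(a)$, which is exactly the claim. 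So the whole argument reduces to constructing such a witness $\sigma$ and computing its value of $z$.

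I would build $\sigma$ from the LP-optimal solution $\sol$ by the minimal edit that makes the opened neuron $a$ consistent. Keep $\sigma(n) = \sol(n)$ for every neuron $n$ up to and including the pre-activation of $a$, and set $\sigma(\hat a) = \ReLU(\sol(a))$; since the triangular relaxation (Proposition \ref{LP}) forces $\sol(\hat a) \ge \ReLU(\sol(a))$, this is a downward move $\Delta(\hat a) = \ReLU(\sol(a)) - \sol(\hat a) \le 0$ that moreover satisfies the exact ReLU constraint demanded for $a \in X$. Propagating this single change to the next layer $\ell$ leaves each pre-activation at $\sigma(b) = \sol(b) + \Delta(b)$ with $\Delta(b) = W_{ab}\Delta(\hat a)$, because only $\hat a$ was altered. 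Finally I set each $\sigma(\hat b)$ to the vertex of the triangle maximizing $W_{bz}\hat b$: the upper edge $r(b)(\sigma(b) - \LB(b))$ when $W_{bz} > 0$, and the lower edge $\ReLU(\sigma(b))$ when $W_{bz} < 0$. Then $\sigma(z) - \sol(z) = \sum_{b \in \ell} W_{bz}\,(\sigma(\hat b) - \sol(\hat b))$.

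The heart of the computation is to verify, case by case, that $\sigma(\hat b) - \sol(\hat b)$ equals the quantity $\Delta(\hat b)$ from the three-case definition. When $W_{bz} > 0$, LP already placed $\sol(\hat b)$ at the upper edge, so the difference is $r(b)\Delta(b)$, matching case (a). When $W_{bz} < 0$, LP placed $\sol(\hat b) = \ReLU(\sol(b))$, so the difference is $\ReLU(\sol(b) + \Delta(b)) - \ReLU(\sol(b))$; evaluating this piecewise around the kink $b = 0$ reproduces $\max(\Delta(b), -\sol(b))$ when $\sol(b) \ge 0$ (case (b)) and $\max(0, \Delta(b) + \sol(b))$ when $\sol(b) < 0$ (case (c)). Summing and using $\Utility\_\max^z(a) = -\sum_b W_{bz}\Delta(\hat b)$ yields $\sigma(z) = \sol(z) - \Utility\_\max^z(a)$, and the reduction of the first paragraph closes the proof. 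The one-layer formula for $\Utility\_\max^z(b)$ is the base case of the very same construction, where no propagation is needed and feasibility of $\sigma$ is immediate.

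The step I expect to be the main obstacle is showing that $\sigma$ is \emph{genuinely} feasible for $\mathcal{M}_{\{a\}}$ — concretely, that every propagated pre-activation $\sigma(b) = \sol(b) + \Delta(b)$ still lies in its precomputed range $[\LB(b),\UB(b)]$, since otherwise the triangle over $b$ is empty and the chosen $\sigma(\hat b)$ is inadmissible. The natural way to discharge this is to observe that $\sigma$, restricted to the layers feeding $\ell$, remains a feasible point of the triangular relaxation (we only replaced $\sol(\hat a)$ by another value inside its triangle, namely the lower vertex), so that the validity of $\LB(b),\UB(b)$ as over-approximations of all relaxed values of $b$ forces $\sigma(b) \in [\LB(b),\UB(b)]$; making this implication precise, and ruling out that tightening $\hat a$ pushes some $b$ outside its stored interval, is the delicate part. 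A secondary bookkeeping check is that $\Utility\_\max^z(a) \ge 0$, which follows because $\sigma$ is itself $\mathcal{M}_\emptyset$-feasible while $\sol$ maximizes $z$ over $\mathcal{M}_\emptyset$, hence $\sigma(z) \le \sol(z)$.
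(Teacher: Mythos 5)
Your proof is correct and is essentially the paper's own argument: both construct the same witness (your $\sigma$, the paper's $\sol'$) from the LP optimum by resetting $\hat{a}$ to $\ReLU(\sol(a))$, propagating $\Delta(b)=W_{ab}\Delta(\hat{a})$ through layer $\ell$ with the identical three-case analysis of $\Delta(\hat{b})$ according to the sign of $W_{bz}$ and of $\sol(b)$, and then concluding by feasibility-plus-optimality that $\Sol\_\max_{\{a\}}^z(z) \geq \sol(z)-\Utility\_\max^z(a)$, with the same side remark that $\Utility\_\max^z(a)\geq 0$ follows from $\mathcal{M}_\emptyset$-feasibility of the witness. The one point you flag as delicate --- that the shifted pre-activations $\sigma(b)=\sol(b)+\Delta(b)$ must remain in $[\LB(b),\UB(b)]$ for the witness to be admissible --- is a genuine subtlety that the paper's proof passes over silently, so your treatment is if anything slightly more careful than the original.
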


In particular, for all nodes $a$ with $\Utility\_\max\nolimits^z(a)=0$, 
we are sure that this node is not having any impact on $\Sol\_\max_{\{a\}}^z(z)$. 

%
	
%

	\begin{proof}
    Consider $\sol'(n)_{n \leq z}$ with
	$\sol'(n)=\sol(n)$ for all $n \notin \{z,\hat{a}\} \cup \{b,\hat{b} \mid b \in \ell\}$. In particular,  $\sol'(a) = \sol(a)$.
	Now, define $\sol'(\hat{a}) = \ReLU(\sol(a))$. 
	That is, $\sol'(\hat{a})$ is the correct value for $\hat{a}$, obtained if we open neuron $a$, compared to the LP abstraction for $\sol(\hat{a})$.
	We define $\sol'(b)=\sol(b)+\Delta(b)$ and 
	$\sol'(\hat{b})=\sol(\hat{b}) + \Delta(\hat{b})$.
	Last, $\sol'(z)=\sol(z) + \sum_{b \in \ell} W_{bz} \Delta(\hat{b})$.
	We will show:
	\begin{equation}
		\label{eq12}
		(\sol'(n))_{n \leq z} \text{ satisfies the constraints in } \mathcal{M}_{\{a\}}
	\end{equation} 
	This suffices to conclude: as
	$\sol'(z)$ is a solution of $\mathcal{M}_{\{a\}}$, it is smaller or equal to the maximal solution: $\sol'(z) \leq$ $\Sol\_\max_{\{a\}}^z(z)$. That is, 
	$\sol(z)-\sol'(z) \geq \sol(z) -$ $\Sol\_\max_{\{a\}}^z(z)$, i.e. 
	$ \Utility\_\max^z(a) \geq \Improve\_\max^z(a)$.
	In particular, we have that $\Utility\_\max^z(a) \geq 0$, which was not obvious from the definition.	

Finally, we show (\ref{eq12}). First, opening $a$ changes the value of $\hat{a}$ from
	$\sol(\hat{a})$ to $\ReLU(\sol(a)) = sol(\hat{a}) + \Delta(a)$, 
	and from $sol(b)$ to $sol(b) + \Delta(b)$.
	The case of $\Delta(\hat{b})$ is the most interesting:
	If (a) $W_{bz}>0$, to maximize $z$, the LP solver sets $\sol(\hat{b})$ to the maximal possible value, which is 
	$r(b) \sol(b)+$ Cst according to Proposition \ref{LP}.
Changing $b$ by $\Delta(b)$ thus results in changing $\sol(\hat{b})$ by 
$r(b) \Delta(b)$.
If $W_{bz}\leq0$, then the LP solver sets $\sol(\hat{b})$ to the lowest possible value to maximize $z$, which happens to be $\ReLU(b)$ according to Proposition \ref{LP}.
If (b) $\sol(b) > 0$, then 
$\sol(\hat{b})=\ReLU(\sol(b))=\sol(b)$, and the change to $\hat{b}$ will be 
the full $\Delta(b)$, unless $\Delta(b) < -\sol(b) < 0$ in which case it is 
$-\sol(b)$.
If (c) $\sol(b) < 0$, then we have $\sol(\hat{b})=\ReLU(b)=0$ and opening $a$ moves away 
from 0 only if $\sol(b)+\Delta(b)>0$. 
\qed
\end{proof}

\section{Experimental Evaluation}

We implemented \toolname\ in Python 3.8: it first calls $\alpha,\beta$-CROWN with small time out (10s for small and 30s for larger DNNs), and then call pMILP on the undecided inputs.
Gurobi 9.52 was used for solving LP and MILP problems. We conducted our evaluation on an AMD Threadripper 7970X  ($32$ cores$@4.0$GHz) with 256 GB of main memory and 2 NVIDIA RTX 4090. The  objectives of our evaluation was to answer the following  questions:
\vspace{-0.1cm}

\begin{enumerate}
	\item How does the the choice of the set $X$ impacts the accuracy of $\MILP_X$? 
	\item How accurate is Hybrid MILP, and how efficient is it?
\end{enumerate}

\subsection{Comparison between different scoring functions (accuracy).}

\begin{figure}[t!]
	\centering
	\includegraphics[height=7cm]{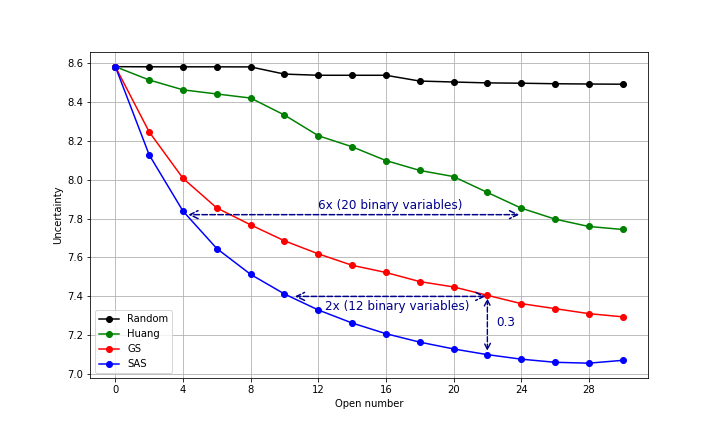}.
	\caption{Average uncertainty of pMILP for neurons of the fourth layer of CNN-B-Adv, 
	when selecting between 0 and 30 ReLUs with different scoring functions.}
	\label{fig_table3}
\end{figure}


To measure the impact of the scoring function to select neurons to open, 
we considered the complex CNN-B-Adv, which has 16634 nodes. 
We tested over the $\vx=85$th image in the CIFAR-10 dataset.
To measure the accuracy, we measure the uncertainty of all nodes in the 4th layer:
the uncertainty of a node is the range between its computed lower and upper bound. 
We then average the uncertainty among all the nodes of the layer.
Formally, the uncertainty of a node $a$ with bounds $[\LB,\UB]$ is uncert$(a) = \UB(a) - \LB(a)$. The average uncertainty of layer $\ell$ is 
$\dfrac{\sum_{a\in l} \text{uncert}(a)}{size(\ell)}.$

We report in Figure \ref{fig_table3} the average uncertainty of $\MILP_X$ following the 
choice of the $K$ heaviest neurons of {\sf SAS}, compared with a random choice, with Huang \cite{DivideAndSlide}, based on strength$(n) = (\UB(n)-\LB(n))\cdot |W_{nz}|$, and with {\sf GS} (here, $s_{FSB}$).

\begin{table}[b!]	
	\centering
	\begin{tabular}{|c||c c|c c|c c|c c|}
		\hline
		  Image 85& \multicolumn{2}{c|}{45 open ReLUs} & \multicolumn{2}{c|}{50 open ReLUs} 
		  & \multicolumn{2}{c|}{55 open ReLUs} & \multicolumn{2}{c|}{60 open ReLUs}\\ 
		 order & distance & time & distance & time & distance & time & distance & time \\
		\hline \hline
		Gurobi & 0.384 & 252s & 0.368&265s  &0.342 & 287s& 0.315  & 306s  \\ 
		{\sf SAS} (static) & 0.384 & 448s & 0.368&759s  &0.365 &971s & 0.355  & 971s \\
		{\sf GS} (static) & 0.384 & {\bf 251s} &0.368 &{\bf 253s}  & 0.342 &{\bf 270s} & 0.315 & {\bf 281s} \\ \hline
	\end{tabular}
	\caption{Comparison of different orderings with selection of ReLUs by {\sf SAS}.
	}
	\label{table.order}
\end{table}

Overall, {\sf SAS} is more accurate than GS, more accurate than Huang, more accurate than random choice of variables. {\sf SAS} significantly outperforms other solutions, 
with 2 times less binary variables for the same accuracy (10 vs 22) vs {\sf GS} and 6 times vs Huang \cite{DivideAndSlide} (4 vs 24).
Each node of the 4th layer displays a similar pattern, and the pattern is similar for different images.

\smallskip

Gurobi relies on a Branch and Bound procedure to compute bounds. We report in Table \ref{table.order} experiments on changing the ordering on variables in Gurobi, when the selection of ReLUs is fixed by {\sf SAS} on CNN-B-Adv Image 85. We compared 
the standard Gurobi ordering of variables, which is adaptative in each branch, 
with the static order provided by {\sf SAS}, as well as the static order provided by GS, wrt runtime at a fixed accuracy (we report the {\em distance} to verify the image).

{\sf SAS} ordering is particularly counter-productive because it is accurate only for one branch (the most complicated one), whereas Gurobi can adapt the order to each branch.
However, the GS {\em ordering} (with the {\sf SAS} {\em selection}) is better as it is general and not local to a solution, with better runtime than Gurobi, despite its staticness whereas Gurobi order is adaptative.

\newcolumntype{C}{>{\centering\arraybackslash}X}

\subsection{Comparison with MILP (time and accuracy)}

Restricting the set $X$ of open ReLU nodes potentially hurts accuracy. 
Another strategy could be to still compute inductively accurate bounds for each nodes,
replacing pMILP ({\sf SAS, GS}, etc.) with a full MILP model, and stopping it early after some fixed time-out (100s,250s, $\ldots$, 2000s).

We evaluate in Figure \ref{fig555} the full MILP and different pMILP models on the output layer of CNN-B-Adv. The bounds for hidden layers have been computed with the same {\sf SAS} method.
We compare the {\em distance to verify} the same Image 85, specifically the lower bound of an output neuron, the furthest away from verification.
The curve is not as smooth as in Fig.~\ref{fig_table3}, as a unique neuron is considered rather than an average over neurons.

\begin{figure}[b!]
	\includegraphics[scale=0.5]{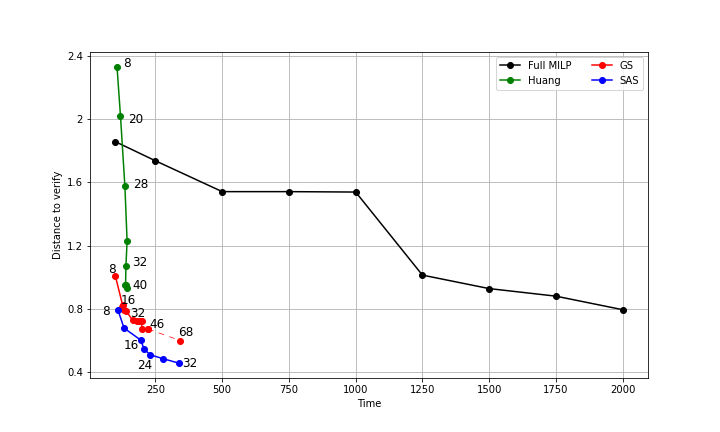}
	\caption{Distance to verify vs runtime: comparison between {\sf SAS}, GS, full MILP and Huang's method for different number of opened ReLUs / time-outs.}
	\label{fig555}
\end{figure}

{\sf SAS} (even {\sf GS}$=s_{FSB}$) is much more accurate than full MILP for every reasonable time-outs considered, with $>10$ times fastest runtime at better accuracy. The reason is that even the advanced MILP solver Gurobi struggles to sort out all ReLUs. It is thus particulary important to have accurate scoring functions as {\sf SAS}, in order to optimize both accuracy and runtime. Huang \cite{DivideAndSlide} is limited in ReLUs in the previous layer, reason why it is stuck at relatively poor accuracy. The accuracy/runtime curve from {\sf GS} is closer to {\sf SAS} than the number of nodes / accuracy curve of Fig. \ref{fig_table3}. This is because many ReLU nodes deemed important by {\sf GS} are not relevant for accuracy, but they are also not penalizing runtime. Still, {\sf SAS} is faster than {\sf GS} at  every accuracy. Importantly, {\sf SAS} is more deterministic in its runtime given a number of ReLU nodes, with half the variance in runtime over different output neurons compared with {\sf GS} for similar accuracy, which helps setting a number of ReLU nodes to open.


\subsection{Ablation study: on usefulness of computing previous layers accurately}	

\begin{figure}[b!]
	\vspace*{-0.8cm}
	\includegraphics[scale=0.5]{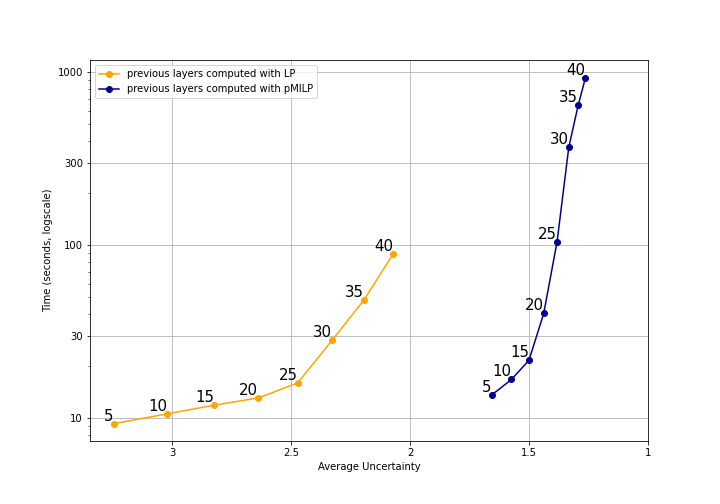}.
    \vspace*{-0.2cm}
	\caption{Comparison of accuracy in layer 3 when bounds for neurons in layer 2 is computed inaccurately using LP vs when bounds of layer 2 are computed accurately using pMILP.
	Time is using logscale.}
	\label{fig3LP}
\end{figure}

We explore the usefulness of computing accurately bounds for each neuron inductively on the layers, 
even on small networks. For that, we consider MNIST $5 \times 100$, computing bounds for nodes of layer 3, comparing when bounds for neurons of layer 2
have been computed inaccurately using LP rather than with the more accurate (partial) MILP.

This experiment explains the rationale to use divide and conquer methodology, 
using many calls (one for each neuron) of pMILP with relatively small number $|X|$ of open ReLUs 
rather than few calls (one per output neuron) of pMILP with larger number $|X|$ of open nodes. 
The benefit is clear already from layer 3, obtaining much tighter bounds (lower uncertainty) when bounds for neurons in layer 2 have been computed accurately using pMILP.

\subsection{Comparison with $\alpha,\beta$-CROWN}

To assess the verification efficiency and runtime vs $\alpha,\beta$-CROWN, we 
conducted evaluations  on neural networks tested in \cite{crown} which are mostly {\em complex} to verify (as easy instances are already appropriately taken care of). Namely, 6 ReLU-DNNs: 5 MNIST DNN that can be found in the \href{https://github.com/eth-sri/eran}{ERAN GitHub} 
(the 4th to the 8th DNNs provided) as well as 1 CIFAR CNN from 
\cite{AdversarialTrainingAndProvableDefenses}, see also \cite{SDPFI}, 
which can be downloaded from the \href{https://github.com/Verified-Intelligence/alpha-beta-CROWN}{$\alpha,\beta$-CROWN GitHub}. 
We commit to the same $\epsilon$ settings as in \cite{crown}, that are recalled in Table \ref{table_beta}. 
For reference, we also report an easy but very large ResNet Network for CIFAR10, already tested with $\alpha,\beta$ CROWN. 
We report in Table \ref{table_hybrid} the $\%$ of undecided images, that is the $\%$ of images than can be neither falsified (by $\alpha,\beta$-CROWN) nor verified by the tested verifier, among the 100 first images for each MNIST or CIFAR10 benchmark. 
The exact same DNNs, image set and $\epsilon$ are used in Tables \ref{table_beta} and \ref{table_hybrid}.

\begin{table}[b!]
	\centering
	\begin{tabular}{||l||c|c|c||c||c||}
		\hline \hline
		 & $\alpha,\beta$-CROWN & $\alpha,\beta$-CROWN & $\alpha,\beta$-CROWN & Refined & \bf Hybrid \\ 
		 Network & TO=10s & TO=30s & TO=2000s & $\beta$-CROWN & \bf MILP \\ 
		\hline
		MNIST 5$\times$100 & 57\% (6.9s) & 55\% (18.9s) & 50\% (1026s) & 13\% 
		(92s) & 13\% \bf (46s) \\ \hline
		MNIST 5$\times$200 & 50\% (6.5s) & 47\% (17s) & 46\% (930s) & 9\% (80s) & \bf 8\% (71s) \\ \hline
		MNIST 8$\times$100 & 63\% (7.2s) & 58\% (20s) & 58\% (1163s) & 21\% (102s) & \bf 15\% (61s) \\ \hline
		MNIST 8$\times$200 & 56\% (6.8s) & 55\% (18s) & 54\% (1083s) & 16\% (83s) & \bf 8\% (78s) \\ \hline
		MNIST 6$\times$500 & 53\% (6.4s) & 51\% (16s) & 50\% (1002s) & $-$ & 
		\bf 10\% (402s) \\ \hline
		CNN-B-Adv & 28\% (4.3s) & 22\% (8.7s) & 20\% (373s) & $-$ & \bf 11\% (417s) \\ \hline \hline
		ResNet & 0\% (2s) & 0\% (2s) & 0\% (2s) & $-$ & 0\% (2s) \\ \hline \hline
	\end{tabular}
	\caption{Undecided images ($\%$, {\em lower is better}) as computed by $\alpha,\beta$-CROWN, Refined $\beta$-CROWN and Hybrid MILP on 7 DNNs (average runtime per image). The 6 first DNNs are hard instances. The last DNN (ResNet) is an easy instance (trained using Wong to be easy to verify), provided for reference.
	}
	\label{table_hybrid}
	\end{table}

{\em Analysis}: overall, Hybrid MILP is very accurate, only leaving $8\%$-$15\%$ of images undecided, with runtime taking less than $500s$ in average per image, and even 10 times less on smaller DNNs. It can scale up to quite large hard DNNs, such as CNN-B-Adv with 2M parameters.

Compared with $\alpha,\beta$-CROWN: on easy instances, Hybrid MILP is virtually similar to $\alpha,\beta$-CROWN (e.g. even on the very large ResNet), since Hybrid MILP calls first $\alpha,\beta$-CROWN as it is very efficient for easy instances.

On hard instances (the 6 first DNNs tested), compared with $\alpha,\beta$-CROWN with a time-out of TO=2000s, Hybrid MILP is much more accurate, with a reduction of undecided images by $9\%-43\%$. It is also from 20x faster on smaller networks to similar time on the largest DNN. Compared with $\alpha,\beta$-CROWN with a time-out of TO=30s, the accuracy gap is even larger (e.g. $11\%$ for CNN-B-Adv, i.e. half the undecided images), although the average runtime is also obviously larger (solving hard instances takes longer than solving easy instances).

Last, compared with {\em Refined} $\beta$-CROWN, we can observe three patterns: on the shallowest DNNs (5$\times$100, 5$\times$200), Refined $\beta$-CROWN can run full MILP on almost all nodes, reaching almost the same accuracy than  Hybrid MILP, but with longer runtime (up to 2 times on 5$\times$100). As size of DNNs grows (8$\times$100, 8$\times$200), full MILP invoked by Refined $\beta$-CROWN can only be run on a fraction of the neurons, and the accuracy is not as good as Hybrid MILP, with $6\%-8\%$ more undecided images (that is double on 8$\times$200), while having longer runtime. Last but not least, Refined $\beta$-CROWN cannot scale to larger instances (6$\times$500, CNN-B-Adv), while Hybrid MILP can.

\subsection{Comparison with other Verifiers}

We voluntarily limited the comparison so far to $\alpha,\beta$-CROWN because it is one of the most efficient verifier to date, which allowed us to consider a spectrum of parameters to understand $\alpha,\beta$-CROWN scaling without too much clutter.

Interestingly, GCP-CROWN \cite{cutting} is slightly more accurate than $\alpha,\beta$-CROWN on the DNNs we tested, but necessitates IBM CPLEX solver, which is not available to us.
We provide in Table \ref{table_gcp} results from the Table 3 page 9 of \cite{cutting}, experimenting with different verifiers on the most interesting CIFAR CNN-B-Adv. Notice that results are not exactly comparable with ours because the testing images are not the same (ours has $62\%$ upper bound while \cite{cutting} has $65\%$ upper bound, so the image set in \cite{cutting} are slightly easier). There, GCP-CROWN is $2\%$ more accurate (with higher runtime) than $\alpha,\beta$-CROWN with 90s TO, so we can deduce that Hybrid MILP, which is $9\%$ more accurate than $\alpha,\beta$-CROWN with 2000s TO, is significantly more accurate than GCP-CROWN.


\begin{table}[b!]
	\centering
		\caption{Images verified by different verifiers ($\%$, higher is better) on CIFAR CNN-B-Adv. Results, from \cite{cutting}, are not fully comparable with Tables \ref{table_beta},\ref{table_hybrid}.}
	\label{table_gcp}
	\begin{tabular}{||c||c|c|c|c|c||}
		\hline
		Upper Bound & SDP-FO & PRIMA & refined $\beta$-CROWN & $\beta$-CROWN & GCP-CROWN \\  \hline
		65\% & 32.8\% & 38\% & 27\% & 46.5\% & 48.5\% \\
		$\epsilon = 2/255$ & $>25$h & $344$s & $361$s & $32$s &  $58$s  \\  \hline
	\end{tabular}
\end{table}

Results on other networks from other verifiers (PRIMA \cite{prima}, SDP-FO \cite{SDPFI}, etc) were already reported \cite{crown} on other tested DNNs, with unfavorable comparison vs $\alpha,\beta$-CROWN. Further, we reported accuracy of NNenum \cite{nnenum}, Marabou \cite{Marabou,Marabou2}, respectively 4th, 3rd of the last VNNcomp'24 \cite{VNNcomp24}, as well as full MILP \cite{MILP} in Table \ref{table_complete}, showing that these verifiers are not competitive on complex (even small) instances. Concerning MnBAB \cite{ferrari2022complete}, and it compares slightly unfavorably in time and accuracy towards $\alpha,\beta$-CROWN on CNN-B-Adv and {\em complex} MNIST DNNs at several time-out settings. Last, Pyrat \cite{pyrat} (2nd in the latest VNNComp) is not open source, which made running it impossible. 

\section{Conclusion}
In this paper, we developed a novel solution-aware scoring ({\sf SAS}) function to select few ReLU nodes to consider with binary variables to compute accurately bounds in DNNs. 
The solution awareness allows SAS to compute an accurate score for each ReLU, which enables partial MILP to be very efficient, necessitating $\approx6$x less binary variables than previous proposals \cite{DivideAndSlide} for the same accuracy, and $\approx2$x less than {\sf GS} scoring adapted from FSB \cite{FSB}. As the worst-case complexity is exponential in the number of binary variables, this has large implication in terms of scalability to larger DNNs, making it possible to verify accurately quite large DNNs such as CNN-B-Adv with 2M parameters. 

While $\alpha,\beta$-CROWN is known to be extremely efficient to solve easier verification instances, we exhibit many cases (complex instances) where its worst-case exponential complexity in the number of ReLUs is tangible, with unfavorable scaling (Table \ref{table_beta}). Resorting to Hybrid MILP, a divide-and-conquer approach \cite{DivideAndSlide}, revisited thanks to the very efficient {\sf SAS}, revealed to be a much better trade-off than augmenting $\alpha,\beta$-CROWN time-outs, with $8\%$ to $40\%$ less undecided images at ISO runtime. Currently, for hard instances, there is no alternative to partial MILP, other methods being $>10$ times slower.

This opens up interesting future research directions, to verify global \cite{lipshitz}, \cite{sensing} rather than local (robustness) properties, which need very accurate methodology
and give rise to hard instances as the range of each neuron is no more local to a narrow neighborhood (and thus almost all ReLUs are unstable, with both modes possible).



\smallskip
{\bf Acknowledgement:} 
This research was conducted as part of the DesCartes program and 
was supported by the National Research Foundation, Prime Minister's Office, Singapore, 
under the Campus for Research Excellence and Technological Enterprise
(CREATE) program, and partially supported by ANR-23-PEIA-0006 SAIF.

\bibliography{references}

\begin{thebibliography}{10}

\bibitem{atva}
Mohammad Afzal, Ashutosh Gupta, and S.~Akshay.
\newblock {Using Counterexamples to Improve Robustness Verification in Neural
  Networks}.
\newblock In {\em Automated Technology for Verification and Analysis
  (ATVA'23)}, LNCS 14215, pages 422--443, 2023.

\bibitem{nnenum}
Stanley Bak.
\newblock nnenum: Verification of relu neural networks with optimized
  abstraction refinement.
\newblock In {\em NASA Formal Methods Symposium}, pages 19--36. Springer, 2021.

\bibitem{AdversarialTrainingAndProvableDefenses}
Mislav Balunovic and Martin Vechev.
\newblock Adversarial training and provable defenses: Bridging the gap.
\newblock In {\em International Conference on Learning Representations
  (ICLR'20)}, 2020.

\bibitem{VNNcomp23}
Christopher Brix, Stanley Bak, Changliu Liu, and Taylor~T. Johnson.
\newblock The fourth international verification of neural networks competition
  (vnn-comp 2023): Summary and results, 2023.

\bibitem{VNNcomp24}
Christopher Brix, Stanley Bak, Changliu Liu, Taylor~T. Johnson, David Shriver,
  and Haoze~(Andrew) Wu.
\newblock 5th international verification of neural networks competition
  (vnn-comp'24), 2024.

\bibitem{VNNcomp}
Christopher Brix, Mark~Niklas Müller, Stanley Bak, Taylor~T. Johnson, and
  Changliu Liu.
\newblock First three years of the international verification of neural
  networks competition (vnn-comp), 2023.

\bibitem{sensing}
Rudy Bunel, Krishnamurthy Dvijotham, M.~Pawan Kumar, Alessandro~De Palma, and
  Robert Stanforth.
\newblock Verified neural compressed sensing, 2024.

\bibitem{BaB}
Rudy Bunel, Jingyue Lu, Ilker Turkaslan, Philip~HS Torr, Pushmeet Kohli, and
  M~Pawan Kumar.
\newblock Branch and bound for piecewise linear neural network verification.
\newblock {\em Journal of Machine Learning Research}, 21(42):1--39, 2020.

\bibitem{SDPFI}
Sumanth Dathathri, Krishnamurthy Dvijotham, Alexey Kurakin, Aditi Raghunathan,
  Jonathan Uesato, Rudy~R Bunel, Shreya Shankar, Jacob Steinhardt, Ian
  Goodfellow, Percy~S Liang, et~al.
\newblock Enabling certification of verification-agnostic networks via
  memory-efficient semidefinite programming.
\newblock {\em Advances in Neural Information Processing Systems},
  33:5318--5331, 2020.

\bibitem{alessandro}
Alessandro De~Palma, Harkirat~S Behl, Rudy Bunel, Philip Torr, and M~Pawan
  Kumar.
\newblock Scaling the convex barrier with active sets.
\newblock In {\em International Conference on Learning Representations
  (ICLR'21)}. Open Review, 2021.

\bibitem{FSB}
Alessandro De~Palma, Rudy Bunel, Alban Desmaison, Krishnamurthy Dvijotham,
  Pushmeet Kohli, Philip~HS Torr, and M~Pawan Kumar.
\newblock Improved branch and bound for neural network verification via
  lagrangian decomposition.
\newblock {\em arXiv preprint arXiv:2104.06718}, 2021.

\bibitem{pyrat}
Serge Durand, Augustin Lemesle, Zakaria Chihani, Caterina Urban, and Francois
  Terrier.
\newblock Reciph: Relational coefficients for input partitioning heuristic.
\newblock In {\em 1st Workshop on Formal Verification of Machine Learning
  (WFVML 2022)}, 2022.

\bibitem{DBLP_Ehlers17}
R{\"{u}}diger Ehlers.
\newblock Formal verification of piece-wise linear feed-forward neural
  networks.
\newblock In Deepak D'Souza and K.~Narayan Kumar, editors, {\em Automated
  Technology for Verification and Analysis - 15th International Symposium,
  {ATVA} 2017, Pune, India, October 3-6, 2017, Proceedings}, volume 10482 of
  {\em Lecture Notes in Computer Science}, pages 269--286. Springer, 2017.

\bibitem{MIPplanet}
Ruediger Ehlers.
\newblock Formal verification of piece-wise linear feed-forward neural
  networks.
\newblock In {\em Automated Technology for Verification and Analysis: 15th
  International Symposium, ATVA 2017, Pune, India, October 3--6, 2017,
  Proceedings 15}, pages 269--286. Springer, 2017.

\bibitem{ferrari2022complete}
Claudio Ferrari, Mark~Niklas Mueller, Nikola Jovanovi{\'c}, and Martin Vechev.
\newblock Complete verification via multi-neuron relaxation guided
  branch-and-bound.
\newblock In {\em International Conference on Learning Representations
  (ICLR'22)}, 2022.

\bibitem{DivideAndSlide}
Chao Huang, Jiameng Fan, Xin Chen, Wenchao Li, and Qi~Zhu.
\newblock Divide and slide: Layer-wise refinement for output range analysis of
  deep neural networks.
\newblock {\em IEEE Transactions on Computer-Aided Design of Integrated
  Circuits and Systems}, 39(11):3323--3335, 2020.

\bibitem{Reluplex}
Guy Katz, Clark Barrett, David~L. Dill, Kyle Julian, and Mykel~J. Kochenderfer.
\newblock Reluplex: An efficient smt solver for verifying deep neural networks.
\newblock In Rupak Majumdar and Viktor Kun{\v{c}}ak, editors, {\em Computer
  Aided Verification}, pages 97--117, Cham, 2017.

\bibitem{Marabou}
Guy Katz, Derek~A. Huang, Duligur Ibeling, Kyle Julian, Christopher Lazarus,
  Rachel Lim, Parth Shah, Shantanu Thakoor, Haoze Wu, Aleksandar Zelji{\'{c}},
  David~L. Dill, Mykel~J. Kochenderfer, and Clark Barrett.
\newblock The marabou framework for verification and analysis of deep neural
  networks.
\newblock In Isil Dillig and Serdar Tasiran, editors, {\em Computer Aided
  Verification}, pages 443--452, Cham, 2019. Springer International Publishing.

\bibitem{prima}
Mark~Niklas Muller, Gleb Makarchuk, Gagandeep Singh, Markus P\"{u}schel, and
  Martin Vechev.
\newblock Prima: General and precise neural network certification via scalable
  convex hull approximations.
\newblock volume~6, New York, NY, USA, jan 2022. Association for Computing
  Machinery.

\bibitem{VNNcomp22}
Mark~Niklas Müller, Christopher Brix, Stanley Bak, Changliu Liu, and Taylor~T.
  Johnson.
\newblock The third international verification of neural networks competition
  (vnn-comp 2022): Summary and results, 2022.

\bibitem{kpoly}
Gagandeep Singh, Rupanshu Ganvir, Markus P{\"u}schel, and Martin Vechev.
\newblock Beyond the single neuron convex barrier for neural network
  certification.
\newblock {\em Advances in Neural Information Processing Systems}, 32, 2019.

\bibitem{deeppoly}
Gagandeep Singh, Timon Gehr, Markus P\"{u}schel, and Martin Vechev.
\newblock An abstract domain for certifying neural networks.
\newblock {\em Proc. ACM Program. Lang.}, 3(POPL), jan 2019.

\bibitem{MILP2}
Gagandeep Singh, Timon Gehr, Markus Püschel, and Martin Vechev.
\newblock Robustness certification with refinement.
\newblock In {\em International Conference on Learning Representations
  (ICLR'19)}, 2019.

\bibitem{szegedy}
Christian Szegedy, Wojciech Zaremba, Ilya Sutskever, Joan Bruna, Dumitru Erhan,
  Ian Goodfellow, and Rob Fergus.
\newblock Intriguing properties of neural networks.
\newblock {\em International Conference on Learning Representations (ICLR'14)},
  2014.

\bibitem{optC2V}
Christian Tjandraatmadja, Ross Anderson, Joey Huchette, Will Ma, Krunal~Kishor
  Patel, and Juan~Pablo Vielma.
\newblock The convex relaxation barrier, revisited: Tightened single-neuron
  relaxations for neural network verification.
\newblock {\em Advances in Neural Information Processing Systems},
  33:21675--21686, 2020.

\bibitem{MILP}
Vincent Tjeng, Kai Xiao, and Russ Tedrake.
\newblock Evaluating robustness of neural networks with mixed integer
  programming.
\newblock {\em International Conference on Learning Representations (ICLR'19)},
  2019.

\bibitem{crown}
Shiqi Wang, Huan Zhang, Kaidi Xu, Xue Lin, Suman Jana, Cho-Jui Hsieh, and
  J.~Zico Kolter.
\newblock Beta-crown: Efficient bound propagation with per-neuron split
  constraints for neural network robustness verification.
\newblock In M.~Ranzato, A.~Beygelzimer, Y.~Dauphin, P.S. Liang, and J.~Wortman
  Vaughan, editors, {\em Advances in Neural Information Processing Systems},
  volume~34, pages 29909--29921. Curran Associates, Inc., 2021.

\bibitem{lipshitz}
Zhilu Wang, Chao Huang, and Qi~Zhu.
\newblock Efficient global robustness certification of neural networks via
  interleaving twin-network encoding.
\newblock In {\em 2022 Design, Automation \& Test in Europe Conference \&
  Exhibition (DATE)}, pages 1087--1092. IEEE, 2022.

\bibitem{Marabou2}
Haoze Wu, Omri Isac, Aleksandar Zelji{\'c}, Teruhiro Tagomori, Matthew Daggitt,
  Wen Kokke, Idan Refaeli, Guy Amir, Kyle Julian, Shahaf Bassan, Pei Huang, Ori
  Lahav, Min Wu, Min Zhang, Ekaterina Komendantskaya, Guy Katz, and Clark
  Barrett.
\newblock Marabou 2.0: A versatile formal analyzer of neural networks.
\newblock In Arie Gurfinkel and Vijay Ganesh, editors, {\em Proceedings of the
  ${\it 36^{th}}$ International Conferenceon Computer Aided Verification (CAV
  '24)}, volume 14681 of {\em Lecture Notes in Computer Science}, pages
  249--264. Springer, July 2024.
\newblock Montreal, Canada.

\bibitem{TrainingforVerification}
Dong Xu, Nusrat~Jahan Mozumder, Hai Duong, and Matthew~B. Dwyer.
\newblock Training for verification: Increasing neuron stability to scale dnn
  verification.
\newblock In Bernd Finkbeiner and Laura Kov{\'a}cs, editors, {\em Tools and
  Algorithms for the Construction and Analysis of Systems}, pages 24--44, Cham,
  2024. Springer Nature Switzerland.

\bibitem{xu2020fast}
Kaidi Xu, Huan Zhang, Shiqi Wang, Yihan Wang, Suman Jana, Xue Lin, and
  Cho{-}Jui Hsieh.
\newblock Fast and complete: Enabling complete neural network verification with
  rapid and massively parallel incomplete verifiers.
\newblock In {\em International Conference on Learning Representations
  (ICLR'21)}. OpenReview.net, 2021.

\bibitem{cutting}
Huan Zhang, Shiqi Wang, Kaidi Xu, Linyi Li, Bo~Li, Suman Jana, Cho-Jui Hsieh,
  and J.~Zico Kolter.
\newblock General cutting planes for bound-propagation-based neural network
  verification.
\newblock In {\em Advances in Neural Information Processing Systems 35: Annual
  Conference on Neural Information Processing Systems 2022, NeurIPS 2022, New
  Orleans, LA, USA, November 28 - December 9, 2022}, 2022.

\end{thebibliography}
\bibliographystyle{plain}

\newpage

\bigskip

\appendix

\vspace{-0.6cm}

\section*{Appendix}

\section{Parameter settings}

\subsection*{Setting for Hybrid MILP}

Hybird MILP first call $\alpha,\beta$-CROWN with short time-out (TO), then call partial MILP on those inputs which was neither certified nor falsified by this run of $\alpha,\beta$-CROWN. We are using two settings of TO, for smaller DNNs we use TO$=10s$, and for the two larger ones, we use TO$=30s$.

Partial MILP uses 20 CPU-threads, while $\alpha,\beta$-CROWN uses massively parallel ($>$4096 threads) GPU,

The setting for partial MILP for fully-connected DNNs is about how many neurons need to be opened (once set, the selection is automatic). The runtime depending crucially upon the number of open ReLU neurons, we set it quite tightly, only allowing few neuron deviation to accommodate to a particularly accurate/inaccurate bound computation (measured by the weight of the remaining SAS score). As complexity increases with the layer considered, as the size of the MILP model grows, we lower this number with the depth, only committing to an intermediate number for the output neuron (the number of output neurons  is smaller than hidden layer, and this is the most important computation). We experimentally set this number so that each computing the bounds in each hidden layer takes around the same time. Remember that in layer 1, partial MILP is not necessary and propagating bounds using interval arithmetic is already exact. We open [48,48] to compute bounds for hidden layer 2, [21,24] for layer 3, [11,14] for layer 4, [6,9] for layer 5, [3,6] for layer 6, [2,5] for layer 7, [1,4] for hidden layer 8 (if any), and we open [14,17] for the output layer.
 The exact number of open nodes in the range [a,a+3] is decided automatically for each neuron being computed : ReLUs are ranked according to their value by SAS, and the a top ReLUs are open. Then, ReLUs ranked a+1,a+2, a+3 are opened if their SAS value is larger than a small threshold. We set the threshold at 0.01. It should be seen as a way to save runtime when SAS knows that the next node by ranking (a+i) will not impact accuracy much (thanks to the upper bound from Proposition \ref{prop2}).

\begin{table}[b!]
	\centering
		\caption{Settings of Hybrid MILP for the different {\em hard} instances}
	\begin{tabular}{||l||c|c||}
		\hline \hline
		Network & TO for $\alpha,\beta$-CROWN  & Minimum number of Open neurons  \\ 		  
		\hline
		MNIST $5 \times 100$ & 10s  & 48,21,11,6,14  \\ \hline
		MNIST $5 \times 200$ & 10s & 48,21,11,6,14  \\ \hline
		MNIST $8 \times 100$ & 10s  & 48,21,11,6,3,2,1,14  \\ \hline
		MNIST $8 \times 200$ & 10s & 48,21,11,6,3,2,1,14  \\ \hline
		MNIST $6 \times 500$ & 30s & 48,21,11,6,3,14 \\ \hline
		CIFAR CNN-B-Adv & 30s & 200, 0, 45 \\ \hline \hline
	\end{tabular}
	\label{table20}
	\end{table}

For convolutional CNNs, the strategy is adapted, as there is much more neurons, but in a shallower architecture and not fully connected. 
The second layer is computed accurately, opening 200 neurons, which is manageable as there is only one ReLU layer to consider, and accuracy here is crucial.
We do not open any nodes in the third layer (the first fully connected layer) if the output layer is the next one (which is the case for CNN-B-Adv), and instead rely on the choice of important nodes for the output layer. Otherwise, we open 20 neurons.
In the output layer, we open at least 45 neurons (there is less output neurons than nodes in the previous layer), and enlarge the number of open neurons (up to 300) till we find an upper bound, that is a best current MILP solution, of around +0.1 (this 0.1 was experimentally set as target, a good balance between accuracy and efficiency), and compute a guaranteed lower bound (the goal is to guarantee the bound is $>0$).

Table \ref{table20} sums up the TO and the minimum numbers of ReLU opened.

Last, for Gurobi, we use a custom MIP-Gap (from $0.001$ to $0.1$) and time-out parameters, depending on the seen improvement and the possibility to make a node stable. This is low level implementation details that will be available in the code once the paper is accepted.

Notice that a different balance between accuracy and runtime could be set. For instance, we set up the numbers of open neurons to have similar runtime as Refined $\beta$-CROWN for the first 4 DNNs ($50s-100s$). We could easily target better accuracy (e.g. for $8 \times 100$ with a relatively high $15\%$ undecided images) by increasing the number of open neurons, with a trade-off on runtime (current runtime is at $61s$).
By comparison, the sweet spot for $\alpha,\beta$-CROWN seems to be around TO$=30s$, enlarging the time-out having very little impact on accuracy but large impact on runtime
(Table \ref{table_beta}).

\subsection*{Setting for $\alpha,\beta$-CROWN}

The networks were already tested by $\alpha,\beta$-CROWN \cite{crown}. We thus simply reused the parameter files from \href{https://github.com/Verified-Intelligence/alpha-beta-CROWN/blob/main/complete_verifier/exp_configs/beta_crown/}{their Github}, 
except for time-out which we explicitly mention:
e.g. for CNN-B-Adv: "solver: batch size: 512 beta-crown: iteration: 20"; and
for MNIST 5x100: "solver: batch size: 1024 beta-crown: iteration: 20".


\section{Additional experiments including Ablation studies}

First, we provide Fig.~\ref{fig5}, similar to 
Fig.~\ref{fig_table3} but on a different image, namely 37.
It display similar pattern as image 85, with an even larger difference between SAS and GS (around 3x more binary variables are necessary for GS to match the accuracy of SAS).

\begin{figure}[h!]
	\hspace*{-1cm}
	\includegraphics[scale=0.55]{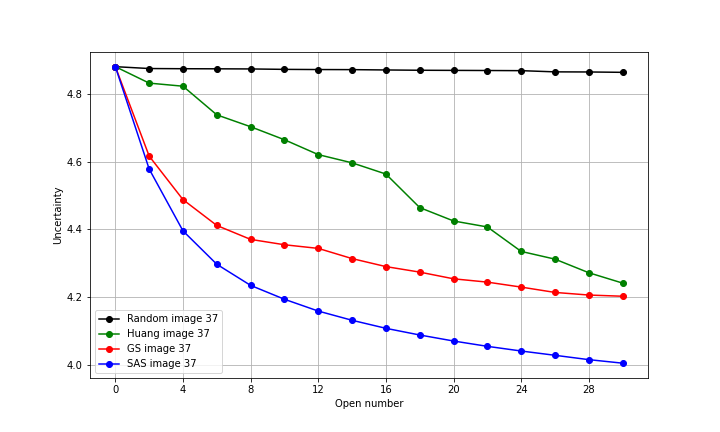}
	\caption{Comparison of different methods with image 37.}
	\label{fig5}
\end{figure}

%

\begin{table}
		\caption{Distance to verify and runtime (lower is better) for different output nodes for image 85. Here, negative distance means the node is verified.}
	\begin{tabular}{|c||>{\centering\arraybackslash}p{6ex}|>{\centering\arraybackslash}p{6ex}|>{\centering\arraybackslash}p{6ex}|>{\centering\arraybackslash}p{6ex}|>{\centering\arraybackslash}p{6ex}|>{\centering\arraybackslash}p{6ex}|>{\centering\arraybackslash}p{6ex}|>{\centering\arraybackslash}p{6ex}|>{\centering\arraybackslash}p{6ex}||>{\centering\arraybackslash}p{6ex}||>{\centering\arraybackslash}p{6ex}|}
		\hline
		Output Node & 0 &  2 & 3 & 4 & 5 & 6 & 7 & 8 & 9 &Avg& Var \\
		\hline
		{\sf SAS} distance & -1.47 & -4.56 & -4.48 & -3.46 & -5.6 & -3.06 & -4.17 & 0.639 & -2.33 &-3.16& \\
		Time (s) & 66.9 & 54.8 & 69.9 & 88.8 & 94.8 & 63.3 & 73.4 & 56.7 & 64.6 &70.3& 184\\
		\hline
		{\sf GS} distance & -1.38 & -4.54 & -4.45 & -3.48 & -5.7 & -2.98 & -4.08 & 0.726 & -2.28 &-3.13&\\
		Time (s) & 71.6 & 84.9 & 71.2 & 113 & 93.7 & 68.5 & 88.6 & 46.5 & 69.4 &78.6&359\\
		\hline
	\end{tabular}
		\label{SAS_GS_similarTO}
	\end{table}
	
Then, we provide Table \ref{SAS_GS_similarTO} to show the runtime variability of {\sf GS} and {\sf SAS}. The number of opens nodes for {\sf SAS} is 14 and the number of open nodes for {\sf GS} is 30, to account for the lack of accuracy of SAS, making the average distance to verify similar between both ($-3.16$ vs $-3.13$). The runtime variability between different output nodes is twice as high for {\sf GS} vs {\sf SAS}, because the number of irrelevant nodes chosen by {\sf GS} is inpredictable. If this number is high, then the runtime can be particularly short (node 8), with also a worse accuracy; while if it is low (e.g. 4), then the runtime is particularly long. This does not necesarily translates into better accuracy, e.g. node 2 where the runtime is much longer for {\sf GS} vs {\sf SAS}, while the accuracy is also lower. {\sf SAS} displays much less variability in time between different output nodes as most nodes are relevant.

Further, we consider ablation studies to understand how each feature enables the efficiency of pMILP.

\subsection*{Time scaling with open nodes}	

First, we explore the time scaling with different number of open nodes, for SAS using nodes in the last two layers (Layer 1 and 2) wrt nodes of layer 3 of $5\times 100$ on image 59 of MNIST, presented in Table \ref{table14} and Fig. \ref{fig3}.

\begin{table}[t!]
	 \caption{Time and uncertainty scaling of pMILP with number of nodes.}
		\centering
		\hspace*{4ex}
		\begin{subtable}[b]{0.45\textwidth}
		\begin{tabular}{|c|c|c|}
		\hline
		$|X|$ & Time & Uncertainty\\ 
		\hline	0 & 2.6 & 1.760946128\\
		\hline	1 & 7.3 & 1.702986873\\
		\hline	2 & 11.1 & 1.65469034\\
		\hline	3 & 16.3 & 1.612137282\\
		\hline	4 & 15.5 & 1.571001109\\
		\hline	5 & 15.7 & 1.531925404\\
		\hline	6 & 15.8 & 1.49535638\\
		\hline	7 & 16.4 & 1.46189314\\
		\hline	8 &  15.8 & 1.4299535\\
		\hline	9 &  17.2 & 1.4006364\\
		\hline	10 & 22.5 & 1.3711203\\
		\hline	11 & 27.2 & 1.3438245\\
		\hline	12 & 21.6 & 1.3183356\\
		\hline	13 & 28.7 & 1.2938690\\
		\hline	14 & 29.6 & 1.2690507\\
		\hline	15 & 24.5 & 1.2475106\\
		\hline
	  \end{tabular}
	\end{subtable}
	\hfill
	\begin{subtable}[b]{0.45\textwidth}
		\begin{tabular}{|c|c|c|}
			\hline
			$|X|$ & Time & Uncertainty\\ 
		\hline	16 & 31.9 & 1.2243065\\
		\hline	17 & 28.6 & 1.2031791\\
		\hline	18 & 30.4 & 1.1839474\\
		\hline	19 & 34.0 & 1.1644653\\
		\hline	20 & 42.1 & 1.1456181\\
		\hline	21 & 47.6 & 1.1261252\\
		\hline	22 & 62.7 & 1.1089745\\
		\hline	23 & 70.0 & 1.0931242\\
		\hline	24 & 70.8 & 1.0773088\\
		\hline	25 & 139.9 & 1.060928\\
		\hline	26 & 154.2 & 1.045715\\
		\hline	27 & 213.1 & 1.030605 \\
		\hline	28 & 211.3 & 1.016058\\
		\hline	29 & 373.1 & 1.001374\\
		\hline max=116 & 3300 & 0.895\\ 
		\hline		
	  \end{tabular}
     \end{subtable}
    	\label{table14}
\end{table}

\begin{figure}[h!]
	\vspace*{-0.8cm}
	\includegraphics[scale=0.6]{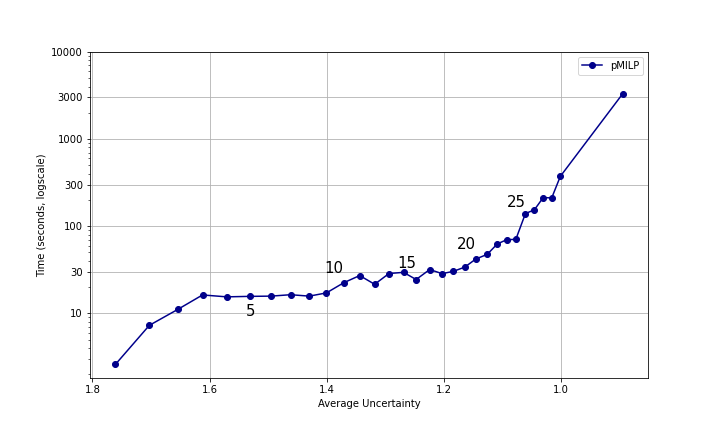}.
	\caption{Time and uncertainty scaling of pMILP with number of nodes.
	Time is using logscale.}
	\label{fig3}
\end{figure}

The exponential complexity with the number of nodes can be seen on Figure \ref{fig3}, where time is represented using logarithmic scale. The flat area in the middle is Gurobi having good heuristic to avoid considering all $2^K$ cases when $K<21$ is not too large, but not working so well for $K>25$. Notice that when certifying, pMILP uses $|X| \in$ 21-24, which is a good trade off between time and accuracy.

\subsubsection*{restricting number of open nodes (pMILP) vs setting time-outs (full MILP)}	

Running full MILP till a small MIP-Gap (typically 0.001) is reached is extremely time inefficient.

Instead, the standard strategy is to set a reasonable time-out and use whatever bound has been generated. We compare this standard strategy with the pMILP strategy of setting a priori a number of open nodes.

\begin{figure}[h!]\hspace*{-0.8cm}
	\includegraphics[scale=0.6]{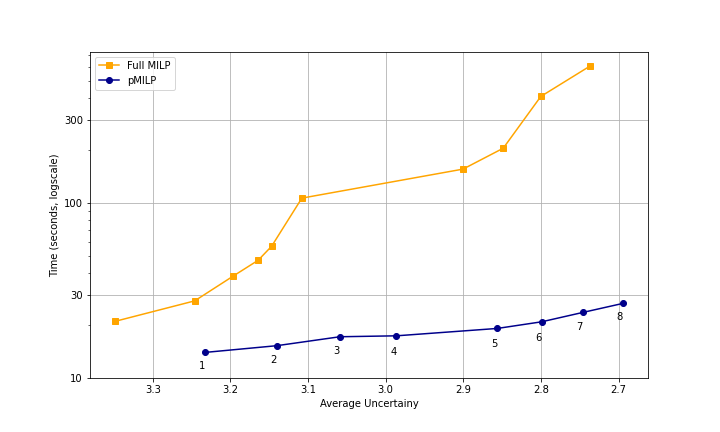}.
	\caption{Comparison of uncertainty at layer 7 for full MILP with different time-outs vs pMILP with different number of open nodes. Time is using logscale.}
	\label{fig4}
\end{figure}

\begin{table}[h!]
	\caption{Comparison of bounding the number of nodes for pMILP and 
		using different time outs for full MILP. In both settings, lower and upper bounds of previous layers are the same (computed by pMILP).}
	\centering
	\hspace*{4ex}
\begin{subtable}[b]{0.45\textwidth}
	\centering
		\begin{tabular}{|c|c|c|}
	\hline
		$|X|$ & Time & Uncertainty\\ 
	\hline1 &	14 & 3.233021901\\
\hline	2 & 15.2 & 3.140309921\\
\hline	3 & 17.21 & 3.059083103\\
\hline 4 &	17.4 & 2.986166762\\
\hline	5 &19.2 & 2.856229765\\
\hline	6 &20.9 & 2.799248232\\
\hline	7 &23.7 & 2.746167245\\
\hline	8 &26.6 & 2.69485246\\	
	\hline
	\end{tabular}
	\caption{pMILP}
\end{subtable}
\hfill
\begin{subtable}[b]{0.45\textwidth}
	\centering
		\begin{tabular}{|c|c|}
		\hline
		Time & Uncertainty\\ 
		\hline	21.1 & 3.348236261\\
		\hline	27.6 & 3.24604282\\
		\hline	38.2 & 3.196640184\\
		\hline	47.1 & 3.164298172\\
		\hline	56.7 & 3.146913614\\
		\hline	106.7 & 3.108035223\\
		\hline	156.3 & 2.900438725\\
		\hline	205.8 & 2.848648426\\	
		\hline	406.7 & 2.800268264 \\	
		\hline	606.1 & 2.737064255\\	
		\hline
	\end{tabular}
		\caption{full MILP}
\end{subtable}
	\label{table12}
	\end{table}

pMILP obtains 2.8 accuracy in $<21$ seconds (with 7 open nodes), while full MILP needs 400 seconds to obtain it, a 19x speed up. For 2.7 accuracy, the speedup is $>>$ 22.

Figure \ref{fig4} shows that choosing nodes is much more efficient for time/accuracy trade-off than setting time outs and use full MILP. And this is for the smallest DNN we considered (500 hidden neurons, far from the biggest 20k neuron DNN we experimented with)

\section{Comparison with other DNN verifiers}

In the following, we provide results comparing $\alpha,\beta$-CROWN to other verifiers, to justify our use of $\alpha,\beta$-CROWN as state of the art for efficient verifiers as main source of comparison to Hybrid MILP for hard DNN instance.

\subsection*{Comparison $\alpha,\beta$-CROWN vs PRIMA}

\begin{table}[h!]
		\caption{Undecided images ($\%$, {\em lower is better}), as computed by $\alpha,\beta$-CROWN, Refined $\beta$-CROWN, and PRIMA, as reported in \cite{crown}, except for $6 \times 500$ that we run ourselves. N/A means that \cite{crown} did not report the numbers, while $-$ means that Refined $\beta$-CROWN cannot be run on these DNNs.}
	\centering
	\begin{tabular}{||l||c|c||c||}
		\hline \hline
		Network & $\alpha,\beta$-CROWN & Refined $\beta$-CROWN & PRIMA \\ 		  
		\hline
		MNIST $5 \times 100$ & N/A  & 14.3\% (102s) & 33.2\% (159s)\\ \hline
		MNIST $5 \times 200$ & N/A & 13.7\% (86s) & 21.1\% (224s) \\ \hline
		MNIST $8 \times 100$ & N/A  & 20.0\% (103s) & 39.2\% (301s)   \\ \hline
		MNIST $8 \times 200$ & N/A & 17.6\% (95s) & 28.7\% (395s)  \\ \hline
		MNIST $6 \times 500$ & 51\% (16s) & $-$ & 64\% (117s) \\ \hline
		CIFAR CNN-B-Adv & 18.5\% (32s) & $-$ & 27\% (344s)\\ \hline \hline
		CIFAR ResNet & 0\% (2s) & $-$ & 0\% (2s) \\ \hline \hline
	\end{tabular}
	\label{table9}
	\begin{tablenotes}
		\footnotesize
		\item Most data is directly from \cite{crown}. N/A means no data either in \cite{crown} or by our running.
		\item  $^*$ The data in this row is from our own running on first 100 images of the MNIST dataset.
		\item  $^{**}$ The data is from \cite{crown} on first 200 images of the CIFAR10 dataset.
	\end{tablenotes}
	\end{table}

	PRIMA \cite{prima} is a major verifier in the ERAN toolkit. In Table \ref{table9}, we report the comparison between PRIMA and $\alpha,\beta$-CROWN, mainly from \cite{crown}. The setting is mainly similar from ours, but numbers are not perfectly comparable as the images tested are not  exactly the same (1000 first or 200 first images for CNN-B-Adv), vs 100 first in Tables \ref{table_hybrid}, \ref{table_beta}. Also, time-out settings and hardware are slightly different. The overall picture is anyway the same.

Analysis: On the 4 smallest MNIST networks, PRIMA uses a refined path comparable with Refined $\beta$-CROWN. However, it is slower and less accurate than Refined $\beta$-CROWN.
On larger {\em hard} networks, PRIMA has also more undecided images than $\alpha,\beta$-CROWN, while the runtime is $>5$ times larger.
Hence, Hybrid MILP is more accurate than PRIMA with similar runtime or faster.

Notice that kPoly \cite{kpoly}, OptC2V \cite{optC2V}, SDP-FO \cite{SDPFI} numbers were also reported in \cite{crown} on these networks, with even more unfavorable results.

\subsection*{Comparison $\alpha,\beta$-CROWN vs MN-BaB}

MN-BaB \cite{ferrari2022complete} is an improvement built over PRIMA, using a similar Branch and Bound technique as used in $\alpha,\beta$-CROWN. Results in \cite{ferrari2022complete}
are close to those of $\alpha,\beta$-CROWN. However, none of the {\em hard} networks from \cite{crown} that we consider have been tested. We thus tested three representative {\em hard} DNNs (first 100 images) to understand how MN-BaB fairs on such hard instances, and report the numbers in Table \ref{table10}. Results are directly comparable with Table \ref{table_hybrid}.

\begin{table}[h!]
	\centering
		\caption{Undecided images ($\%$, {\em lower is better}), as computed by $\alpha,\beta$-CROWN, and MN-BaB}
	\begin{tabular}{||l||c|c||c|c||}
		\hline \hline
		 & $\alpha,\beta$-CROWN & $\alpha,\beta$-CROWN & MN-BaB & MN-BaB \\ 
		 Network & TO=30s & TO=2000s &  TO=30s & TO=2000s \\ 
		\hline
		MNIST $5 \times 100$ & 55\% (19s) & 50\%(1026s) & 60\% (19s) & 50\% (1027s) \\ \hline
		MNIST $6 \times 500$ & 51\% (16s) & 50\% (1002s) & 58\% (18s) & 55\% (1036s) \\ \hline
		CIFAR CNN-B-Adv & 22\% (8.7s) & 20\% (373s) & 43\% (14s) & 24\% (576s) \\ \hline 
	\end{tabular}
	\label{table10}
\end{table}

Analysis: results reveal that MN-BaB is slightly slower and slightly less accurate than $\alpha,\beta$-CROWN. Notice the specially high number of undecided images for CNN-B-Adv with TO=30s, probably meaning that 30s is too small for MN-BaB on this large DNN.
Hence, Hybrid MILP is more accurate than MN-BaB with similar runtime or faster.

	\subsection*{Comparison $\alpha,\beta$-CROWN vs NNenum}

NNenum \cite{nnenum} is a complete verifier with good performance according to VNNcomp.
It was the only complete verifier tested in Table \ref{table_complete} to verify more images than $\alpha,\beta$-CROWN. The experiments section in \cite{nnenum} does not report
the {\em hard} DNNs we are considering. We tried to experiment it on the same MNIST 
$6 \times 500$ and CIFAR CNN-B-Adv as we did in Table \ref{table10} for MN-BaB. Unfortunately, on $6 \times 500$, buffer overflow were reported.
We report in Table \ref{table11} experiments with the same 2000s Time-out (it was $10 000s$ in Table \ref{table_complete})  for a fair comparison with $\alpha,\beta$-CROWN, on both 
MNIST $5 \times 100$ and CIFAR CNN-B-Adv. 
On MNIST $5 \times 100$, NNenum is slightly more accurate than $\alpha,\beta$-CROWN, but far from the accuracy Hybrid MILP.
On CIFAR CNN-B-Adv, NNenum was much less accurate than $\alpha,\beta$-CROWN, and thus of Hybrid MILP. In both test, the runtime of NNenum was also much longer than for Hybrid MILP.

\begin{table}[h!]
	\centering
	\begin{tabular}{||l||c||c||c||c||}
		\hline \hline
		 & $\alpha,\beta$-CROWN & NNenum & Hybrid\\ 
		 Network & TO=2000s &  TO=2000s & MILP\\ 
		\hline
		MNIST $5 \times 100$ & 50\%(1026s) & 44\% (1046s) & \bf 13\% (46s)\\ \hline
		CIFAR CNN-B-Adv & 20\% (373s) & 40\% (1020s) & \bf 11\% (417s)\\ \hline 
	\end{tabular}
	\caption{Undecided images ($\%$, {\em lower is better}), as computed by $\alpha,\beta$-CROWN and NNenum with 2000s time-out, and Hybrid MILP}.
	\label{table11}
\end{table}

\section{Average vs max time per pMILP call}

We provide in Table \ref{table112} the average as well as maximum time to perform $\MILP_X$ calls as called by pMILP, on a given input: image 3 for MNIST, and image 76 for CIFAR10. 
For 6x500, we provide results for two different $\varepsilon$.


\begin{table}[h!]
	\centering
	\begin{tabular}{||l|c|c||}
		\hline
		Network & average time & maximum time \\ \hline
		MNIST 5$\times$100 & 0.41s & 1.87 \\
		$\epsilon = 0.026$ &  & \\  \hline
		MNIST 5$\times$200 &  0.75s & 5.31s \\ 
		$\epsilon = 0.015$ & & \\  \hline
		MNIST 8$\times$100 & 0.39s & 1.41s \\
		$\epsilon = 0.026$ & &  \\  \hline
		MNIST 8$\times$200 & 0.49s & 1.63s \\ 
		$\epsilon = 0.015$ & & \\  \hline
		MNIST 6$\times$500 & &   \\ 
		$\epsilon = 0.035$ & 1.4s & 3.5s \\ 
		$\epsilon = 0.1$ & 44.6s & 310s \\  \hline 
		CIFAR CNN-B-Adv &  & \\
		$\epsilon = 2/255$& 1s & 609s \\ \hline \hline
	\end{tabular}
	\caption{average and maximum time per $\MILP_X$ calls for image 3 (MNIST) and image 76 (CIFAR10).}
	\label{table112}
\end{table}

Notice that DNN 6$\times$ 500 and $\epsilon=0.1$ is a very hard instance as being very close to the falsification $\epsilon \approx 0.11$. This is thus not representative of the average case. Also,  on this image 3, pMILP succeeds to verify $\epsilon= 1.054$, while $\alpha,\beta$-CROWN can only certify $\epsilon = 0.0467$ within the 10 000s Time-out.

For CNN-B-Adv, the very long maximum time for a MILP call is an outlier: it happens only for one output layer, for which the number $K$ of open nodes is particularly large (around 200 out of 20000 neurons) to certify this hard image 76. Indeed, the average time is at $1s$. Notice that this does not lead to a runtime of 20.000s, as 20 threads are used by pMILP 
in parallel (similar to competing solutions, except $\alpha,\beta$-CROWN which uses $>4096$ GPU cores).

\end{document}